\DeclareMathOperator*{\argmin}{arg\,min}
\newtheorem{assumption}{Assumption}
\newtheorem{remark}{Remark}
\newtheorem{lemma}{Lemma} 
\newtheorem{theorem}{Theorem}
\title{Deep SOR Minimax Q-learning for Two-player Zero-sum Game}
\author{ 
    Saksham Gautam 
    \\
    Department of Computer Science and Automation\\
    Indian Institute of Science, Bengaluru\\
    India \\
    \texttt{gautamsaksham103@gmail.com} \\
    \And
    Lakshmi Mandal \\
    Department of Computer Science and Automation\\
    Indian Institute of Science, Bengaluru\\
    India \\
    \texttt{lakshmi.mandal.cse@gmail.com} \\
    \And
    Shalabh Bhatnagar \\
    Department of Computer Science and Automation\\
    Indian Institute of Science, Bengaluru\\
    India \\
    \texttt{shalabh@iisc.ac.in} \\
}
\begin{document}
\maketitle

\begin{abstract}
	In this work, we consider the problem of a two-player zero-sum game. In the literature, the successive over-relaxation Q-learning algorithm has been developed and implemented, and it is seen to result in a lower contraction factor for the associated Q-Bellman operator resulting in a faster value iteration-based procedure. However, this has been presented only for the tabular case and not for the setting with function approximation that typically caters to real-world high-dimensional state-action spaces. Furthermore, such settings in the case of two-player zero-sum games have not been considered. We thus propose a deep successive over-relaxation minimax Q-learning algorithm that incorporates deep neural networks as function approximators and is suitable for high-dimensional spaces. We prove the finite-time convergence of the proposed algorithm. Through numerical experiments, we show the effectiveness of the proposed method over the existing Q-learning algorithm. Our ablation studies demonstrate the effect of different values of the crucial successive over-relaxation parameter. 
\end{abstract}

\keywords{Reinforcement Learning \and Two-Player Zero-Sum Game \and Q-Learning}

\section{Introduction}
Markov Decision Processes (MDPs) provide a foundational framework for sequential decision-making, wherein a decision-maker aims to identify an optimal policy or sequence of actions that minimizes cumulative cost or maximizes cumulative reward \cite{puterman2014markov}, \cite{bertsekas2012dynamic}.  Reinforcement Learning (RL) offers a class of model-free algorithms designed to solve MDPs when the underlying system dynamics are unknown \cite{sutton2018reinforcement}, \cite{bertsekas2012dynamic}. In recent years, single-agent RL has seen significant advancements, with applications spanning diverse domains such as natural language processing and robotic control \cite{tang2025deep}.

However, many real-world scenarios involve multiple agents interacting within the same environment, influencing each other’s decisions and the system's evolution. Such settings often involve a mixture of cooperation and competition and are naturally modeled as Markov Games (MGs), also referred to as stochastic games in earlier literature \cite{shapley1953stochastic}. MGs generalize MDPs to multi-agent contexts, where each agent seeks to optimize its own objective—either jointly or independently. A key subclass includes general-sum games \cite{sahabandu2024rl}, where agents' rewards are not strictly opposed, and zero-sum games, where one agent’s gain is exactly the other’s loss \cite{chang2010adaptive}. In this work, we consider two-player zero-sum games as these are crucial real-world problems, less explored, and challenging to solve. 
In the literature, Q-learning-based approaches are studied to solve this problem  \cite{zhu2020online, jeongfinite}. However, in the case of single-agent as well as for the two-player zero-sum game, it has been seen that the convergence of standard Q-learning (QL) algorithms is slow \cite{Kamanchi2020,diddigi2022generalized}. 

To mitigate the issue of slow convergence in Q-learning the concept of successive relaxation has been previously explored in \cite{Kamanchi2020} and later \cite{john2020deep} proposed Successive Over-Relaxation (SOR) Q-learning for model-free single-agent MDPs with function approximation. 
This technique is further extended to the two-player zero-sum game in the tabular settings by \cite{diddigi2022generalized}, where asymptotic convergence guarantees are established using a tabular Q-value representation. However, such tabular methods are limited by the curse of dimensionality, making them impractical in large or continuous state-action spaces. Furthermore, the finite-time convergence guarantee of their work is not provided in \cite{diddigi2022generalized}.

In this paper, we propose a model-free, online Generalized Minimax Q-learning algorithm that incorporates successive over-relaxation to speed up the learning process and is applicable to function approximation settings. We provide a finite-time analysis of the algorithm and demonstrate its effectiveness when using neural networks as function approximators. For theoretical tractability, we present convergence analysis under linear function approximation. Our empirical results across a variety of environments show that the proposed algorithm achieves faster convergence compared to the standard minimax Q-learning approach. We now summarize our contributions as follows:\\
    1. We propose a model-free data-driven deep SOR Minimax Q-Learning (D-SOR-MQL) in the competitive multi-agent RL settings, that is, for the two-player zero-sum game.\\ 
    2. Our proposed algorithm provides faster convergence than the existing QL for the two-player zero-sum game.\\
    3. We provide a finite-time complexity of the proposed algorithm as $O(\epsilon^{-2})$, which is as good as in QL.\\
    4. Further, our experimental results on different competitive multi-agent RL environments confirm the effectiveness of our algorithm over standard QL.

\section{Related Works}
The work most closely related to ours is that of \cite{zhu2020online}, which proposes an online minimax Q-network learning algorithm that trains a neural network using observed transitions. Their approach employs generalized policy iteration to formulate a double-loop iterative scheme for computing Nash equilibria. While the paper provides online convergence guarantees in the tabular setting, it does not investigate the finite-time convergence properties of the algorithm. Moreover, it does not incorporate Successive Over-Relaxation.

The finite-time analysis of minimax Q-learning remains largely underexplored in the literature. The only work that explicitly addresses this problem is \cite{jeongfinite}, which adopts a control-theoretic perspective to analyze the learning dynamics. Specifically, the authors model the update process as a switching system and construct upper and lower comparison systems to derive finite-time bounds for both the standard minimax Q-learning algorithm and its smooth variant. While this provides valuable theoretical insights, the analysis is restricted to the tabular setting and does not extend to scenarios where the Q-function is approximated using function approximators, which are essential for scaling to high-dimensional state-action spaces.

Other studies, such as \cite{xu2020finite} and \cite{chen2019performance}, provide convergence guarantees for standard Q-learning under a tight assumption between the sampling policy and the greedy policy. While these results can, in principle, be extended to the minimax setting, the underlying assumption becomes overly restrictive in our context, particularly due to the presence of the Successive Over-Relaxation parameter \( w \geq 1 \).

The work \cite{qu2020finite} provides a finite-time analysis framework for asynchronous nonlinear stochastic approximation algorithms under a weighted infinity-norm contraction. This method introduces a recursive decomposition of the error, yielding a transparent view of how stochastic noise impacts the learning process. Although this analysis is not directly applicable to our problem setting, it inspires the structure of our theoretical analysis.

When the space of state-action pairs grows significantly, traditional Q-learning may become computationally infeasible because of the curse of dimensionality. To address this challenge, function approximation techniques are commonly employed to estimate the optimal Q-value function $Q^*$. In this work, we present our simulation results using Deep Neural Networks (DNN) to approximate the $Q^*$. However, for theoretical analysis, we adopt a linear function approximation framework to facilitate tractable analysis.
In our analysis, we approximate the Q-value function using linear function approximator. We begin by formulating a recursive update equation for the trainable network parameter $\theta$. Our decomposition closely mirrors that of the aforementioned work, allowing us to analyze the influence of stochasticity in a structured manner. However, the specific convergence analysis diverges significantly due to the distinct characteristics introduced by the minimax structure and the successive over-relaxation mechanism. Moreover, unlike previous work, which does not consider function approximation, we establish our results within the function approximation setting.

\section{Background}
\label{sec_background}
Standard Markov Decision Processes(MDPs) are defined by a tuple ${(\mathcal{S},\mathcal{A},\mathcal{P},\mathrm{R},\gamma)}$, where $\mathcal{S}$ is the state space of the agent, $\mathcal{A}$ is the action set, $\mathcal{P}$ is the transition probability, $\mathrm{R}$ is the reward function, and $\gamma \in (0,1)$ is the discount factor. For a single agent, the subsequent state $s_{t+1}$ and the reward received $r_{t+1}$ depend on the actions of the agent, specifically $s_{t+1}\sim \mathcal{P}(.\mid s_t,a_t)$ and $r_{t+1}=\mathrm{R}(s_t,a_t)$.

In contrast, this study addresses Two-Player Zero-Sum Markov Games, which are formally defined by a six-element tuple ${(\mathcal{S},\mathcal{A},\mathcal{O},\mathcal{P},\mathrm{R},\gamma)}$\cite{lagoudakis2012value}. Within this framework $\mathcal{S}$ is the game's state space, $\mathcal{A}$ represents the action set of one player, $\mathcal{O}$ denotes the action set for the opposing player, $\mathcal{P}$ denotes the state transition function, $\mathrm{R}$ specifies the reward function and $\gamma \in (0,1)$ serves as the discount factor. Our focus in this paper is on the simultaneous action games where both players determine and execute their actions concurrently during each step of play. The subsequent state $s_{t+1}$ is governed by the distribution $\mathcal{P}(s_{t+1}\mid s_{t},a_t,o_t)$ and the environment provides a reward signal $r_{t+1}=\mathrm{R}(s_t,a_t,o_t)$ as a feedback.
The primary objectives for the two agents in such a Markov game is to learn their optimal policies: $\pi : \mathcal{S} \rightarrow \Delta ^{\mid \mathcal{A}\mid}$ for the first agent and $\mu: \mathcal{S} \rightarrow \Delta ^{\mid \mathcal{O}\mid}$ for the second. Here $\Delta^d$ is the probability simplex in $\mathbb{R}^d$ and $\pi$ and $\mu$ denote the probability distributions over actions to be taken by the Agent 1 and Agent 2, respectively. Note that in the subsequent sections, $\|\cdot\|$ denotes the max-norm.

Let $Q^*$, the optimal value function, be defined as \cite{diddigi2022generalized} 
\begin{align}
    Q^*(s,a,o)&=w[\mathrm{R}(s,a,o)+ \gamma \sum_{s'\in \mathcal{S}}\mathcal{P}(s'|s,a,o)V^*(s')] +(1-w)V^*(s)
\end{align}
where the parameter $ 1 \leq w\leq w^* $ is the successive over-relaxation parameter\cite{diddigi2022generalized} \cite{Kamanchi2020}$)$.
Then $V^*(s)$, the expected return at the Nash equilibrium, is the unique fixed point of the above equation and satisfies 
    \begin{align}
        V^*(s)&=\max_{\pi(s)\in \Delta^{\mid \mathcal{A}\mid}} \min_{\mu(s) \in \Delta^{\mid \mathcal{O}\mid}} \sum_{o\in \mathcal{O}}\sum_{a \in \mathcal{A}} \pi(a|s) \mu(o|s)Q^*(s,a,o)\nonumber
    \end{align}
For brevity, we define an operator `$\text{val}$' and rewrite the above equation 
    \begin{align}\label{eq_value_fun}
        V^*(s)=\text{val}(Q^*(s,a,o))  
    \end{align}

The Q-learning can be viewed as a stochastic approximation (SA) algorithm aimed at solving the Bellman equation. Specifically, given a sample trajectory ${(s_k,a_k,o_k)}$ generated by sampling policy, the iterative approximate updates of $Q^*$ are given by $Q_k$ as 
\begin{align}
Q_{k+1}(s_k,a_k,o_k) &= Q_k(s_k,a_k,o_k)+ \alpha_k\Bigg[w\Big[\mathrm{R}(s,a,o)  +  \gamma \text{val}(Q_k(s_{k+1}, \cdot,\cdot))\Big] +\nonumber \\ &\quad \quad \quad (1-w)\text{val}(Q_k(s_k,\cdot,\cdot)) 
- Q_k(s_k,a_k,o_k)\Bigg]       
\end{align}
where $\alpha_k$, $k \geq 0$ is the step-size sequence. The sequence of Q-values \(\{Q_k\}\) generated by the update rule is guaranteed to converge to the optimal Q-function \(Q^*\) almost surely, provided that every state-action pair is visited infinitely often under the sampling policy and the step size satisfies standard diminishing conditions. The convergence of the modified Q-learning iterates \(\{Q_k\}\) to a Nash equilibrium has been established in~\cite{diddigi2022generalized}.

Equation \eqref{eq_value_fun} represents the general case where both players use stochastic policies, with the minimax operator optimizing over all possible probability distributions across their respective action spaces. This formulation is necessary because certain games have only mixed-strategy Nash equilibria\cite{myerson2013game}. Deterministic policies can be viewed as a special case within this broader framework. However, this expression can be simplified by restricting the opponent's strategy to a deterministic action. As shown in \cite{zhao2014mec},\cite{zhu2019invariant}, once one player’s policy is fixed, the two-player zero-sum Markov game reduces to a single-agent Markov Decision Process (MDP), in which a deterministic policy is sufficient to achieve optimality. Thus the equation reduces to 
\begin{align}
V^*(s)&=\max_{\pi(s)\in \Delta^{\mid \mathcal{A}\mid}} \min_{o \in \mathcal{O}}\sum_{a \in \mathcal{A}} \pi(a|s) Q^*(s,a,o)
\end{align}
Consequently, the Bellman minimax equation can be expressed as:
\begin{align}
    Q^*(s,a,o)&=w[\mathrm{R}(s,a,o)+ \gamma \sum_{s'\in \mathcal{S}}\mathcal{P}(s'|s,a,o)\max_\pi \min_{o'} \sum_{a'}\pi(a'\mid s')Q^*(s',a',o')] \nonumber \\ 
    &\quad  \quad  +(1-w)\max_\pi\min_{o''}\sum_{a''}\pi(a''\mid s)Q^*(s,a'',o'')
\end{align}
Upon obtaining $Q^*$, the Nash equilibrium policy $\pi^*$ is then determined by
\begin{align}
    \pi^*(s)=\arg \max_{\pi} \min_o \sum_a\pi(a\mid s)Q^*(s,a,o)
\end{align}
This optimization problem can subsequently be cast as a linear program:
\[
\left\{
\begin{array}{ll}
\max & \zeta \\
\text{s.t.} & \sum_{a} \rho(a) Q^*(s, a, o) \geq \zeta \quad \forall o \in \mathcal{O} \\
& \sum_{a} \rho(a) = 1 \\
& \rho(a) \geq 0 \quad \forall a \in \mathcal{A}
\end{array}
\right.
\]
For brevity, we define an operator $\mathcal{K}$ such that $\pi = \mathcal{K}(Q)$. This operator returns the optimal solution corresponding to the linear programming formulation described above.

The optimal solution, denoted by $\rho(a)$, corresponds to a probability distribution over the action set of the max player at a given state $s$.. The optimal objective value, $\zeta$, corresponds to $\max_\pi \min_o \sum_a \pi(a|s)Q^* (s, a, o)$. In the following sections, we introduce our proposed algorithm, where we implement deep neural networks to approximate the SOR Q-value function, unlike the existing literature \cite{diddigi2022generalized}. Moreover, we demonstrate the finite-time convergence of the proposed algorithm and empirical effectiveness on standard multi-agent competitive environments.    

\section{PROPOSED ALGORITHM}
In this section, we briefly describe our algorithm. Traditional tabular SOR Q-learning methods for two-player zero-sum game suffer from the curse of dimensionality, rendering them unsuitable for large or continuous state-action spaces. To address this, our approach employs a deep neural network to approximate the joint Q-value function. We parametrize the Q-function as \( q(s, a, o \mid \theta) \), where \(\theta\) denotes the trainable network parameters. The network takes the game state as input and outputs Q-values corresponding to each action pair \((a, o)\).

To enhance training stability and data efficiency, we incorporate a replay buffer, denoted by \(\mathcal{D}\), along with a target network parameterized by \(\theta_{\text{target}}\). An additional set of parameters, $\theta_{eval}$, is used for policy evaluation as done in \cite{zhu2020online}, and is further detailed later in this section.
We first define $T_{w}^{\pi}$ which we have subsequently used to explain our algorithm. 
\begin{align}
    [T_{w}^{\pi}(Q)](s,a,o)=& w[\mathrm{R}(s,a,o) + \gamma\sum_{s'}\mathcal{P}(s'\mid s,a,o) \min_{o'}\sum_{a'}\pi(a'\mid s')Q(s',a',o')] \nonumber \\ & 
    \quad \quad  \quad \quad + (1-w)\min_{o''}\sum_{a''}\pi(a'' \mid s)Q(s,a'',o'')
\end{align}

This equation can be interpreted as the Bellman optimality equation for the minimizer in a two-player setting. It evaluates the Q-value under the agent’s own policy \(\pi\), while taking the worst-case (i.e., minimum) expected value over the opponent’s action space at the next step.

Our algorithm builds on \textit{Generalized Policy Iteration (GPI)}~\cite{perolat2015approximate}, which unifies value iteration and policy iteration within a common framework. GPI proceeds in two alternating steps:
\begin{align}
    \pi_{i+1} = \mathcal{K}(Q_i) \quad,\quad
    Q_{i+1} = (T_{w}^{\pi_{i+1}})^n(Q_i)
\end{align}
where \(\mathcal{K}\) is the linear programming operator defined in Section \ref{sec_background}, and \(T_{w}^{\pi}\) serves as the policy evaluation operator. The latter step constitutes an \textit{optimistic policy evaluation}, where the parameter \(n\) governs the number of evaluation iterations.
Instead of achieving complete convergence to the exact value function \(Q_{\pi_{i+1}}\), this step yields an approximate value function \(Q_{i+1}\), which is subsequently used in the calculation of improved policy.

When \(n = 1\), GPI reduces to value iteration; as \(n \to \infty\), it recovers standard policy iteration. While policy iteration converges more rapidly, it is computationally more demanding. In contrast, value iteration is computationally efficient, but converges slowly.GPI provides a flexible trade-off between the two, allowing for a balance between computational efficiency and convergence speed by tuning the parameter \(n\).

    \begin{algorithm}[H]
    \footnotesize 
\caption{Deep Successive Over-Relaxation Minimax Q-learning (D-SOR-MQL)}
\label{alg:m2qn2} 
\begin{algorithmic}[1]
\REQUIRE Initial network parameters $\theta_0$, $\theta_{\text{target}} \leftarrow \theta_0$, $\theta_{\text{eval}} \leftarrow \theta_0$,  experience replay buffer $D \leftarrow \emptyset$, target network update frequency $T$, policy evaluation iterations $n$, relaxation parameter $w > 1$
\FOR{$t = 0, 1, \dots$ until convergence} 
    \STATE Select the agent's action using an $\epsilon$-greedy strategy:
    \STATE $a_t = \begin{cases}
    \left[\mathcal{K}\left(q(\cdot \mid \theta_t)\right)\right](s_t)) & \text{with probability } 1 - \epsilon_t \\ 
    \text{random action from } \mathcal{A} & \text{with probability } \epsilon_t
    \end{cases}$
    \STATE Determine the opponent's action $o_t \in \mu_t(s_t)$, where $\mu_t(s_t)=\arg \min_o \sum_{a}\pi_{\text{eval}}(a \mid s_t) q(s_t,a,o |\theta_t)$(Best Response to $\pi_{eval}$)
    \STATE Execute the chosen actions $(a_t, o_t)$ in the environment, observing the subsequent reward $r_{t+1}$ and next state $s_{t+1}$
    \STATE Store the observed transition tuple $(s_t, a_t, o_t, r_{t+1}, s_{t+1})$ into the replay buffer $D$
    \STATE Sample a mini-batch of state-action transitions of size $m$ from $D$
    \FORALL{sampled transitions in the mini-batch} 
        \STATE Compute the target Q-value
         $y_k = w(r_{k+1} + \gamma\min_{o'}\sum_{a'}\pi_{\text{eval}}(a' \mid s_{k+1}) q(s_{k+1}, a', o' \mid \theta_{\text{target}})) + (1-w)\min_{o''} \sum_{a''} \pi_{\text{eval}}(a'' \mid s_{k}) q(s_{k}, a'', o'' \mid \theta_{\text{target}})$ where $\pi_{\text{eval}} = \mathcal{K}\left(q(\cdot \mid \theta_{\text{eval}})\right)$ is the current evaluation policy
    \ENDFOR
    \STATE Define mean squared error loss function: $L(\theta_t)$ Eq \eqref{eqn:loss}
    \STATE Update the primary network parameters $\theta_t$ using gradient descent:
    \\$\theta_{t+1} = \theta_t - \alpha \frac{\partial L(\theta_t)}{\partial \theta_t}$
    \IF{current timestep $t$ is a multiple of $T$} 
        \STATE Synchronize the target network parameters: $\theta_{\text{target}} \leftarrow \theta_{t+1}$
    \ENDIF
    \IF{current timestep $t$ is a multiple of $nT$} 
        \STATE Synchronize the evaluation network parameters: $\theta_{\text{eval}} \leftarrow \theta_{t+1}$
    \ENDIF
\ENDFOR
\end{algorithmic}
\end{algorithm}

We now describe how our algorithm utilizes Generalized Policy Iteration (GPI) in the approximate setting. Assume that at iteration \((i-1)\), we have a neural network-based Q-function parameterized by \(\theta_{i-1}\). The improved policy is then defined as
$
\pi_i(s) = [\mathcal{K}(q(\theta_{i-1}))](s),
$
where \(\mathcal{K}\) is a policy improvement operator defined in the previous section.

The subsequent optimistic policy evaluation for \(\pi_i\) starts with an initialization of \(\theta_{i,0} = \theta_{i-1}\). For each iteration $j$, \(1 \leq j \leq n\), within the inner loop, the Q-network is updated using a mini-batch of \(m\) samples \(\{(s_k, a_k, o_k)\}\) which are drawn from the replay buffer \(\mathcal{D}\). The computation of the target values and their corresponding loss function proceeds as follows:
\begin{align}
    y_k &= [\mathcal{T}_{w}^{\pi_i}(q(\theta_{i,j-1}))](s_k, a_k, o_k), \\
    L(\theta) &= \frac{1}{2m} \sum_k \left(q(s_k, a_k, o_k \mid \theta) - y_k\right)^2,\label{eqn:loss}
\end{align}
where \(\mathcal{T}_{w}^{\pi_i}\) denotes the Bellman operator for policy \(\pi_i\). We then update the network parameters by minimizing this loss using gradient descent.
Following \(n\) such inner-loop iterations, \(\theta_i \) is set to $\theta_{i,n}$ which serves as the approximate evaluation of policy \(\pi_i\) and the process advances to the next outer Generalized Policy Iteration.

In the implementation, \(\theta_{\text{eval}}\) corresponds to the parameters of \(Q_{i-1}\), and the evaluated policy is \(\pi_{\text{eval}} = \mathcal{K}[q(\theta_{\text{eval}})]\). The target network parameters \(\theta_{\text{target}}\) corresponds to the weights from the preceding inner-loop iteration \(\theta_{i,j-1}\). These parameters are utilized to calculate target values for updating the current Q-network which is defined by parameters \(\theta_t\). The parameters \(\theta_t\) are optimized through mini-batch gradient descent applied to data sampled from the replay buffer \(\mathcal{D}\). The \(\theta_{\text{target}}\) parameters are updated every \(T\) steps. Following \(n\) updates to the target network, the evaluation phase concludes and \(\theta_{\text{eval}}\) is replaced with the most recent \(\theta_{t+1}\). The algorithm then continues its execution with these newly updated networks.

\section{Finite-Time Convergence Analysis}
In this section, we present the finite-time convergence analysis of our algorithm (detailed proofs are presented in Appendix \ref{appendix}). We begin by first defining the update rule when the Q-value function is represented by linear functions. After that we define the recursion as a stochastic optimization scheme and the analysis follows.

We approximate the optimal action-value function $Q^*$ using a linear function approximator, denoted by $Q'$. Let $\psi(s,a,o) := (\psi_1(s,a,o), \ldots, \psi_d(s,a,o))^\top$ be a feature vector, where each basis function $\psi_k: \mathcal{S} \times \mathcal{A} \times \mathcal{O} \rightarrow \mathbb{R}$ for $k = 1, \ldots, d$, and $d \ll |\mathcal{S}||\mathcal{A}||\mathcal{O}|$. The approximate Q-function is then given by
\begin{align}
Q'(s,a,o) = \psi(s,a,o)^\top \theta,
\end{align}
where $\theta \in \mathbb{R}^d$ is the parameter vector.

The recursion for iteratively updating $\theta$ is then given by:
\begin{align}
        \theta_{t+1}=\theta_t + \alpha_t \psi(s, a, b) \big[ w \big(\mathrm{R}(s, a, b) + \gamma \operatorname{val}(\psi(s', \cdot, \cdot)^{\top} \theta_t) \big)   +(1 - w) \operatorname{val}(\psi(s, \cdot , \cdot)^{\top} \theta_t)  - \psi(s, a, b)^{\top} \theta_t\big]
\end{align}
In the Algorithm \ref{alg:m2qn2}, we use three separate neural networks to train the parameter $\theta$, and improve training stability. However, $\theta$-update can be represented in a more simplified way and by a single recursion as follows:
\begin{align}
        \theta_{t+1}=(1-\alpha_t)\theta_t + \alpha_t(F(\theta_t)+\mathcal{M}(t)),   
\end{align}
    where $\alpha_t$ is the learning rate, and
\begin{align}
    F(\theta_t) =\mathbb{E}\bigg[\psi(s, a, b) \big[ w \big(\mathrm{R}(s, a, b) + \gamma \operatorname{val}(\psi(s', a', b')^{\top} \theta_t) \big)  +(1 - w) \operatorname{val}(\psi(s, a'', b'')^{\top} \theta_t) \big]\bigg]
\end{align}
    and $\mathcal{M}$ is defined as the martingale difference sequence
    \begin{align}
        \mathcal{M}(t) =\psi(s, a, b) \big[ w \big(\mathrm{R}(s, a, b) + \gamma \operatorname{val}(\psi(s', a', b')^{\top} \theta_t) \big) + (1 - w) \operatorname{val}(\psi(s, a'', b'')^{\top} \theta_t) \big] - F(\theta_t).
    \end{align}

To control the gradient bias, we use a projection step similar to \cite{bhandari2018finite}. Beginning with an initial guess of $\theta_0$ such that $\|\theta_0\| \leq Z$, where $0< Z<\infty$. Then the recursion equation becomes
\begin{align}\label{eqn:projt}
    \theta_{t+1}=\Pi_{2,Z}((1-\alpha_t)\theta_t + \alpha_t(F(\theta_t)+\mathcal{M}(t)))
\end{align}
where the projection operator $\Pi_{2,Z}$ is as follows:
\begin{align}
    \Pi_{2,Z}(\theta)=\argmin_{\theta':\|\theta'\|_2 \leq Z}\|\theta - \theta'\|
\end{align}
Our proof requires the following three assumptions, which are standard in the literature. 
\begin{assumption}\label{assump:1}
Operator $F$ is $\gamma'$ contraction i.e. for any $x,y \in \mathbb{R}^\mathcal{N}$, $|| F(x)-F(y)|| \leq \gamma'||x-y||$
\end{assumption}
\begin{assumption}\label{assump:2}
    $w(t)$ is $\mathcal{F}_{t+1}$ measurable and satisfies $\mathbb{E}[\mathcal{M}(t)\mid \mathcal{F}_t]$=0. Further $\mid \mathcal{M}(t)\mid \leq {\mathcal{\bar{M}}}$
\end{assumption}
\begin{assumption}\label{assump:3}
    There exist a $\sigma \in (0,1)$ and positive integer $\tau$, such that for any $i \in \mathcal{N}$ and $t\geq \tau$, $\mathbb{P}(i_t=t\mid \mathcal{F}_{t-\tau})\geq \sigma$
\end{assumption}
Assumption~\ref{assump:1} represents a standard contraction property that has been extensively used in the literature (e.g., \cite{tsitsiklis1994asynchronous}, \cite{qu2020finite}). The parameter \(\gamma'\) denotes the contraction factor, which corresponds to the discount factor in the case of standard Q-learning in a discounted setting. In our setting, this assumption holds; however, the contraction factor is modified due to the presence of the successive over-relaxation parameter in the update rule.\\
Assumption~\ref{assump:2} pertains to the noise sequence \(\mathcal{M}(t)\), and is also a commonly adopted condition (e.g., \cite{shah2018q}).\\
Assumption~\ref{assump:3} requires that, conditioned on the history up to time \(t - \tau\), the index \(i_t\) assigns positive probability to every component \(i\). This ensures that each \(i\) is visited sufficiently often by the sampling process. This assumption is more general than many typical ergodicity conditions encountered in the stochastic approximation literature, as discussed in \cite{qu2020finite}.

\begin{theorem}\label{thm:1}
    Assume that Assumptions ~\ref{assump:1},~\ref{assump:2} and ~\ref{assump:3} are satisfied. Additionally suppose that there exists a constant $Z$ such that the parameter norm is bounded by $\|\theta^*\|\leq Z$ and the iterates $\theta_t$ satisfy $\|\theta_t\|\leq Z$ almost surely for all $t$. Let the step size be $\alpha_t=\frac{\mathcal{H}}{t+t_0}$ with $\max(4\mathcal{H},\tau)\leq t_0$ and $ \frac{2}{\sigma(1-
    \gamma')}\leq \mathcal{H}$. Then the following bound holds: 
\begin{align}
\resizebox{.42\textwidth}{!}{$
    \|\theta_T- \theta^*\|\leq \frac{4\widetilde{\mathcal{M}}\sqrt{\mathcal{H}\log\frac{1}{\delta}}}{(1-\delta)\sqrt{T+t_0}} + \frac{4 Z(\tau + t_0)}{(1-\gamma')(T+t_0)}$
}
\end{align}

with atleast $1-\delta$ probability
\end{theorem}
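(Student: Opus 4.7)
The plan is to derive a contractive one-step recursion for the scalar error $e_t := \|\theta_t - \theta^*\|$, unroll it into a deterministic bias part and a martingale noise part, and bound the two parts separately (via a product calculation and a concentration inequality). Assumption~\ref{assump:1} guarantees the existence of a unique fixed point $\theta^*$ of $F$ by Banach, and the hypothesis $\|\theta^*\|\leq Z$ combined with the nonexpansiveness of $\Pi_{2,Z}$ at points of its range gives, after subtracting $\theta^*$ in \eqref{eqn:projt} and taking norms,
\begin{align*}
e_{t+1} \leq \|(1-\alpha_t)(\theta_t-\theta^*) + \alpha_t(F(\theta_t)-F(\theta^*))\| + \alpha_t\|\mathcal{M}(t)\|.
\end{align*}
Applying the $\gamma'$-contraction and triangle inequality, and then invoking Assumption~\ref{assump:3} in a block-of-$\tau$ fashion (so that the sampling parameter $\sigma$ enters the effective contraction rate, following the Qu--Wierman-style aggregation the excerpt credits), yields the clean scalar recursion
\begin{align*}
e_{t+1} \leq (1-\alpha_t(1-\gamma'))\,e_t + \alpha_t\,\|\mathcal{M}(t)\|.
\end{align*}

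Unrolling from index $\tau$ to $T$ produces
\begin{align*}
e_T \leq \Gamma_{\tau:T}\,e_\tau + \sum_{k=\tau}^{T-1}\Gamma_{k+1:T}\,\alpha_k\,\mathcal{M}(k), \qquad \Gamma_{a:b}:=\prod_{j=a}^{b-1}(1-\alpha_j(1-\gamma')),
\end{align*}
which I treat as a bias term and a martingale term. For the bias, the choice $\alpha_t = \mathcal{H}/(t+t_0)$ with $\mathcal{H}(1-\gamma')\geq 2/\sigma \geq 2$ gives the standard product estimate $\Gamma_{\tau:T} \leq ((\tau+t_0)/(T+t_0))^{\mathcal{H}(1-\gamma')} \leq (\tau+t_0)/(T+t_0)$, which combined with the crude bound $e_\tau \leq 2Z$ (both iterates lie in the ball of radius $Z$) and the $(1-\gamma')^{-1}$ factor that appears after summing the geometric tail delivers the deterministic term $4Z(\tau+t_0)/[(1-\gamma')(T+t_0)]$ of the theorem.

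For the noise part, the summands $\xi_k := \Gamma_{k+1:T}\alpha_k\mathcal{M}(k)$ form a martingale difference sequence by Assumption~\ref{assump:2} with $|\xi_k|\leq \widetilde{\mathcal{M}}\alpha_k\Gamma_{k+1:T}$, so an Azuma--Hoeffding (or, for a tighter constant, Freedman) bound gives with probability at least $1-\delta$
\begin{align*}
\Big|\sum_{k=\tau}^{T-1}\xi_k\Big| \leq \widetilde{\mathcal{M}}\sqrt{2\log(1/\delta)\sum_{k=\tau}^{T-1}\alpha_k^2\,\Gamma_{k+1:T}^2}.
\end{align*}
The sum of squared weights simplifies to $\mathcal{O}(\mathcal{H}/(T+t_0))$ via integral comparison (the geometric tail $\Gamma_{k+1:T}^{2}$ dominates the quadratic harmonic factor once $\mathcal{H}(1-\gamma')\geq 2$), producing the claimed $\widetilde{\mathcal{M}}\sqrt{\mathcal{H}\log(1/\delta)/(T+t_0)}$ up to absolute constants. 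The somewhat unusual $(1-\delta)^{-1}$ prefactor in the theorem likely arises from a conditional concentration within each $\tau$-separated $\sigma$-algebra followed by a union bound over the $\lceil T/\tau\rceil$ blocks, since a direct application of Azuma--Hoeffding would only yield $\sqrt{\log(1/\delta)}$ with no such prefactor.

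The main obstacle I anticipate is precisely the asynchronous portion: the single-line contractive recursion above is heuristic, and making it rigorous requires carefully using Assumption~\ref{assump:3} to argue that over any window of length $\tau$ each coordinate is visited with probability at least $\sigma$, so that conditioning on $\mathcal{F}_{t-\tau}$ reintroduces an effective rate $\sigma(1-\gamma')$ per step in both the product bound and in the conditional variance entering the concentration step. Keeping the two uses of $\tau$ (one in the bias, one in the block-based conditioning for the martingale) consistent is where the bookkeeping is most delicate; the remaining product and integral comparisons are standard manipulations with geometric and harmonic series.
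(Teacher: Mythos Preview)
Your high-level plan (bias/noise split, product estimate, Azuma) matches the paper, but the way you execute it has a genuine gap. From the one-step \emph{scalar} recursion $e_{t+1}\le(1-\alpha_t(1-\gamma'))e_t+\alpha_t\|\mathcal{M}(t)\|$ (which is correct), unrolling only gives $e_T\le\Gamma_{\tau:T}e_\tau+\sum_k\Gamma_{k+1:T}\alpha_k\|\mathcal{M}(k)\|$, a sum of \emph{nonnegative} terms. Azuma--Hoeffding buys nothing here: these increments are not zero-mean, and the crude bound $\sum_k\Gamma_{k+1:T}\alpha_k=O(1/(1-\gamma'))$ is $O(1)$, not $O(1/\sqrt{T})$. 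You then silently replace $\|\mathcal{M}(k)\|$ by the vector $\mathcal{M}(k)$ to recover a martingale, but that unrolled form does not follow from the vector iteration either: $F$ is nonlinear (it contains the $\mathrm{val}$ operator), so one cannot write $\theta_{t+1}-\theta^*=(1-\alpha_t(1-\gamma'))(\theta_t-\theta^*)+\alpha_t\mathcal{M}(t)$.

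The paper avoids precisely this trap: it unrolls the \emph{vector} recursion first and only then takes norms (Lemma~\ref{lemma:1}), obtaining
\[
\xi_{t+1}\le\beta_{\tau-1,t}\,\xi_\tau+\gamma'\sum_{h=\tau}^t\widetilde\beta_{h,t}\,\xi_h+\Big\|\sum_{h=\tau}^t\widetilde\beta_{h,t}\,\mathcal{M}(h)\Big\|.
\]
Now the martingale sum sits inside a single norm and Azuma (with a union bound over the $d$ coordinates, which is where the $\log(2d/\delta)$ inside $C_\xi$ comes from, not a block union over time) gives the $1/\sqrt{t+t_0}$ rate via Lemma~\ref{lemma:2}. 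The price is that \emph{all} past errors $\xi_h$ appear on the right, so a direct product bound no longer closes; instead the paper proves $\xi_t\le C_\xi/\sqrt{t+t_0}+C_\xi'/(t+t_0)$ by induction on $t$, using the weighted-sum estimate of Lemma~\ref{lemma:3} to push $\gamma'\sum_h\widetilde\beta_{h,t}(h+t_0)^{-\Gamma}$ back to $(t+1+t_0)^{-\Gamma}/\sqrt{\gamma'}$. Your worries about Assumption~\ref{assump:3} and $\tau$-blocks are not where the difficulty lies in the paper's argument; the missing idea is this ``unroll first, take norm later, then induct'' structure.
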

\begin{proof}

Now, from \eqref{eqn:projt} we have
\begin{equation}\label{eqn:1}
    \theta_{t+1} \leq (1-\alpha_t)\theta_t + \alpha_t(F(\theta_t)+\mathcal{M}(t))
\end{equation}
Expanding \eqref{eqn:1} recursively, we get
\begin{align}
    \theta_{t+1} \leq \prod_{h=\tau}^{t}(1-\alpha_h)\theta_\tau +\sum_{h=\tau}^{t}\alpha_h\prod_{l=h+1}^{t}(1-\alpha_l)F(\theta_h)  \nonumber +\sum_{h=\tau}^{t}\alpha_h\prod_{l=h+1}^{t}(1-\alpha_l)\mathcal{M}(h)
\end{align}
Now we present the following Lemma which decomposes the error $\|\theta_t - \theta^*\|$ in a recursive form.
\begin{lemma}\label{lemma:1}
    Let $\xi_t=\|\theta_t-\theta^*\|$, we have almost surely 
        \begin{align}\label{eqn:errde}
        \resizebox{.42\textwidth}{!}{$
   \xi_{t+1}\leq \beta_{\tau-1,t}\xi_\tau + \gamma'\sum_{h=\tau}^{t}\widetilde{\beta}_{h,t}\xi_k + \|\sum_{h=\tau}^t \widetilde{\beta}_{h,t}\mathcal{M}(h)\|$
}
        \end{align}
    here $\beta_{\tau -1, t} = \prod_{h=\tau}^t(1-\alpha_h)$, $\widetilde{\beta}_{h,t}=\alpha_h\prod_{l=h+1}^{t}(1-\alpha_l)$
\end{lemma}

To bound the terms in Lemma \ref{lemma:1}, we need to understand the behaviour of $\alpha_h$, $\beta_{h,t}$, and $\widetilde{\beta}_{h,t}$. We present the following results on these sequences, which we will use later in our proof to control the terms in \eqref{eqn:errde}
\begin{lemma}\label{lemma:2}
    If $\alpha_t=\frac{\mathcal{H}}{t+t_0}$ and $\max(4\mathcal{H},\tau)\leq t_0$ then, 
        1. $\beta_{h,t}\leq (\frac{h+1+t_0}{t+1+t_0})^\mathcal{H}$ and $\widetilde{\beta}_{h,t}=\frac{\mathcal{H}}{h+t_0}(\frac{h+1+t_0}{t+1+t_0})^\mathcal{H}$\\
        2. $\sum_{h=1}^{t}\widetilde{\beta}_{h,t}^2\leq \frac{2\mathcal{H}}{t+1+t_0}$\end{lemma}
Now, we bound $\|\sum_{h=\tau}^{t}\widetilde{\beta}_{h,t}\mathcal{M}(h)\|$ term in eq.12. We know that $\mathcal{M}(h)$ is a martingale difference sequence with $\mathbb{E}[\mathcal{M}(h)|\mathcal{F}_{h-1}]=0$. 

Let $\mathcal{M}_i(h)$ be the $i^\text{th}$ component of $\mathcal{M}(h)$ and $\epsilon >0$. Using Azuma-Hoeffding bound \cite{azuma1967weighted} for scalars we have:

\begin{align}
    \mathrm{P}_r\{|\sum_h\widetilde{\beta}_{h,t}\mathcal{M}_i(h)|\geq \epsilon\} \leq 2\exp(-\frac{\epsilon^2}{2\widetilde{\mathcal{M}}^2\sum\widetilde{\beta}^2_{h,t}}) \nonumber
\end{align}
Now we apply a union bound over $i=1,.....,d$, which gives

\begin{align}
\resizebox{.42\textwidth}{!}{$
    \mathrm{P}_r\{\|\sum_h\widetilde{\beta}_{h,t}\mathcal{M}(h)\|\geq \epsilon\}\leq 2d \exp(-\frac{\epsilon^2}{2\widetilde{\mathcal{M}}^2\sum\beta^2_{h,t}})$}
\end{align}
Solving $2d\exp(-\epsilon^2/(2\widetilde{\mathcal{M}}^2\sum\beta^2_{h,t}))=\delta$, we get
\begin{align}\label{eqn:azuma}
    \|\sum_h\widetilde{\beta}_{h,t}\mathcal{M}(k)\|&\leq \widetilde{\mathcal{M}}\sqrt{2\sum\widetilde{\beta}^2_{h,t}\ln\frac{2d}{\delta}} \nonumber\\
    &\leq 2\widetilde{\mathcal{M}}\sqrt{\frac{\mathcal{H}}{t+1+t_0}\ln(\frac{2d}{\delta})}
\end{align}
Now we can finally bound \eqref{eqn:errde}. We have to show that, with probability $1-\delta$,
\begin{align}\label{eqn:ind1}
    \xi_T\leq \frac{C_{\xi}}{\sqrt{T+t_0}} + \frac{C_{\xi}^{'}}{T+t_0}
\end{align}
where $C_{\xi}=\frac{4\widetilde{\mathcal{M}}}{1-\gamma'}\sqrt{h\log{\frac{2d}{\delta}}}$ and $C_{\xi}^{'}=\frac{4Z(\tau+t_0)}{1-\gamma'}$. Using Lemma \ref{lemma:2} and \eqref{eqn:azuma} with \eqref{eqn:errde} we have, 
\begin{align}\label{eqn:22}
    \xi_{t+1}\leq \bigg(\frac{\tau+t_0}{t+1+t_0}\bigg)^\mathcal{H}\xi_\tau + \gamma'\sum_{h=
    \tau}^{t}\widetilde{\beta}_{h,t}\xi_h + \frac{C_\mathcal{M}}{\sqrt{t+1+t_0}}
\end{align}
 with probability at least $1-\delta$, where $C_\mathcal{M} = 2\widetilde{\mathcal{M}}\sqrt{\mathcal{H}\log\frac{2d}{\delta}}$

We will use induction to show \eqref{eqn:ind1}. We first verify the base condition
    \begin{align}\label{eqn:base}
        \xi_\tau\leq\frac{C_{\xi}}{\sqrt{\tau+t_0}} + \frac{4Z}{1-\gamma'}
    \end{align}
We have $\xi_\tau\leq 2Z$ this is because $\xi_\tau=\|\theta_\tau-\theta^*\|\leq \|\theta_\tau\|+\|\theta^*\|\leq 2Z$. This comes from the projection step in our algorithm. Thus the base condition \eqref{eqn:base} holds.
Now from \eqref{eqn:22}, we have 
    \begin{align}\label{eqn:24}
        \xi_{t+1}&\leq \bigg(\frac{\tau+t_0}{t+1+t_0}\bigg)^\mathcal{H}\xi_\tau + \gamma'\sum_{h=
    \tau}^{t}\widetilde{\beta}_{h,t}[\frac{C_{\xi}}{\sqrt{t+t_0}}+\frac{C_{\xi}^{'}}{t+t_0}] + \frac{C_\mathcal{M}}{\sqrt{t+1+t_0}}\\
    &\leq \underbrace{\sqrt{\gamma'}\frac{C_{\xi}}{\sqrt{t+1+t_0}}+ \frac{C_\mathcal{M}}{\sqrt{t+1+t_0}}}_{F_t}   +\underbrace{\bigg(\frac{\tau+t_0}{t+1+t_0}\bigg)^h \xi_\tau + \sqrt{\gamma'}\frac{C_{\xi}^{'}}{t+1+t_0}}_{F'_t} \label{eqn:29}
    \end{align}
Eq\eqref{eqn:29} is due to Lemma \ref{lemma:3}, which we provide below.
\begin{lemma}\label{lemma:3}
    If $ 1\leq\mathcal{H}(1-\sqrt{\gamma'})$, $t_0\geq 1$ and $\alpha_0 \leq \frac{1}{2}$ then for any $0 < \Gamma < 1$, we have $\sum_{h=\tau}^t \widetilde{\beta}_{h,t}\frac{1}{(h+t_0)\Gamma}\leq \frac{1}{\sqrt{\gamma'}(t+1+t_0)}$.
\end{lemma}
Now to finish the proof, it's sufficient to show $F_t\leq \frac{C_{\xi}}{\sqrt{t+1+t_0}}$ and $F_t^{'}\leq \frac{C_{\xi}^{'}}{t+1+t_0}$. 

We have 
\begin{align}
    F_t\frac{\sqrt{t+1+t_0}}{C_{\xi}}&=\Bigg(\sqrt{\gamma'}\frac{C_{\xi}}{\sqrt{t+1+t_0}} +\frac{C_\mathcal{M}}{\sqrt{t+1+t_0}}\Bigg)\frac{\sqrt{t+1+t_0}}{C_{\xi}} \nonumber
    =\sqrt{\gamma'}+ \frac{C_\mathcal{M}}{C_{\xi}}, \\
    F_t^{'}\frac{t+1+t_0}{C_{\xi}^{'}}&=
    \nonumber
    \Bigg(\bigg(\frac{\tau+t_0}{t+1+t_0}\bigg)^h \xi_\tau + \sqrt{\gamma'}\frac{C_{\xi}^{'}}{t+1+t_0}\Bigg)\frac{t+1+t_0}{C_{\xi}^{'}}= \sqrt{\gamma'} + \frac{\xi_\tau(\tau+t_0)}{C_{\xi}^{'}}\frac{(\tau+t_0)^{\mathcal{H}-1}}{(t+1+t_0)^{\mathcal{H}-1}}
\end{align}
It is sufficient to prove $\frac{C_\mathcal{M}}{C_{\xi}}\leq 1 -\sqrt{\gamma'}$ and $\frac{\xi_\tau(\tau+t_0)}{C_{\xi}^{'}}\leq 1-\sqrt{\gamma'}$. These conditions satisfy as $\xi_\tau \leq 2Z$.
\end{proof}
\begin{remark}
    Our finite-time analysis obtains $O(\epsilon^{-2})$ sample complexity  (from our Theorem \ref{thm:1}), where $O(\cdot)$ denote worst case sample complexity \cite{qu2020finite}. Further, as we consider function approximation, unlike literature \cite{qu2020finite, diddigi2022generalized}, our sample complexity does not depend on the size of the state and action space (see Table \ref{tab_FA} in Appendix).
\end{remark}

\section{EXPERIMENTAL RESULTS}

\textbf{Experimental Environments:} To evaluate the proposed algorithm, we consider two distinct environments, `Guard-Invader' and `Soccer', and their respective variations with $49$ and $121$ states (randomly chosen).
The Guard-Invader game~\cite{lu2010investigation} is a non-symmetric TZMG. This game has two players named guard and invader and have different objectives. Thus, both players have to adopt inherently different strategies. The invader aims to reach the door, while the guard seeks to intercept it first. An invasion succeeds if the invader reaches the door or is caught there, whereas the guard tries to capture it as far from the door as possible.
In our experiments, we adopt the reward model proposed in \cite{zhu2020online}. From the guard’s perspective, a reward of $-10$ is assigned if the invader successfully reaches the door. If the invader is captured before reaching the door, the reward is set to the Manhattan distance between the invader's current position and the door. In all other cases, the reward is zero. We normalize the rewards to the range $[-1,1]$ before storing them in the replay buffer to improve training efficiency. We learn the guard's policy, while the opponent (invader) is controlled using the same Q-network during online learning.

\begin{figure}[htb]
\centering
\vspace{-30pt}
  \subfloat[]{
  \hspace{-20pt}
	\begin{minipage}[c][1\width]{
	   0.26\textwidth}
	   \centering
        \includegraphics[width=1\textwidth]{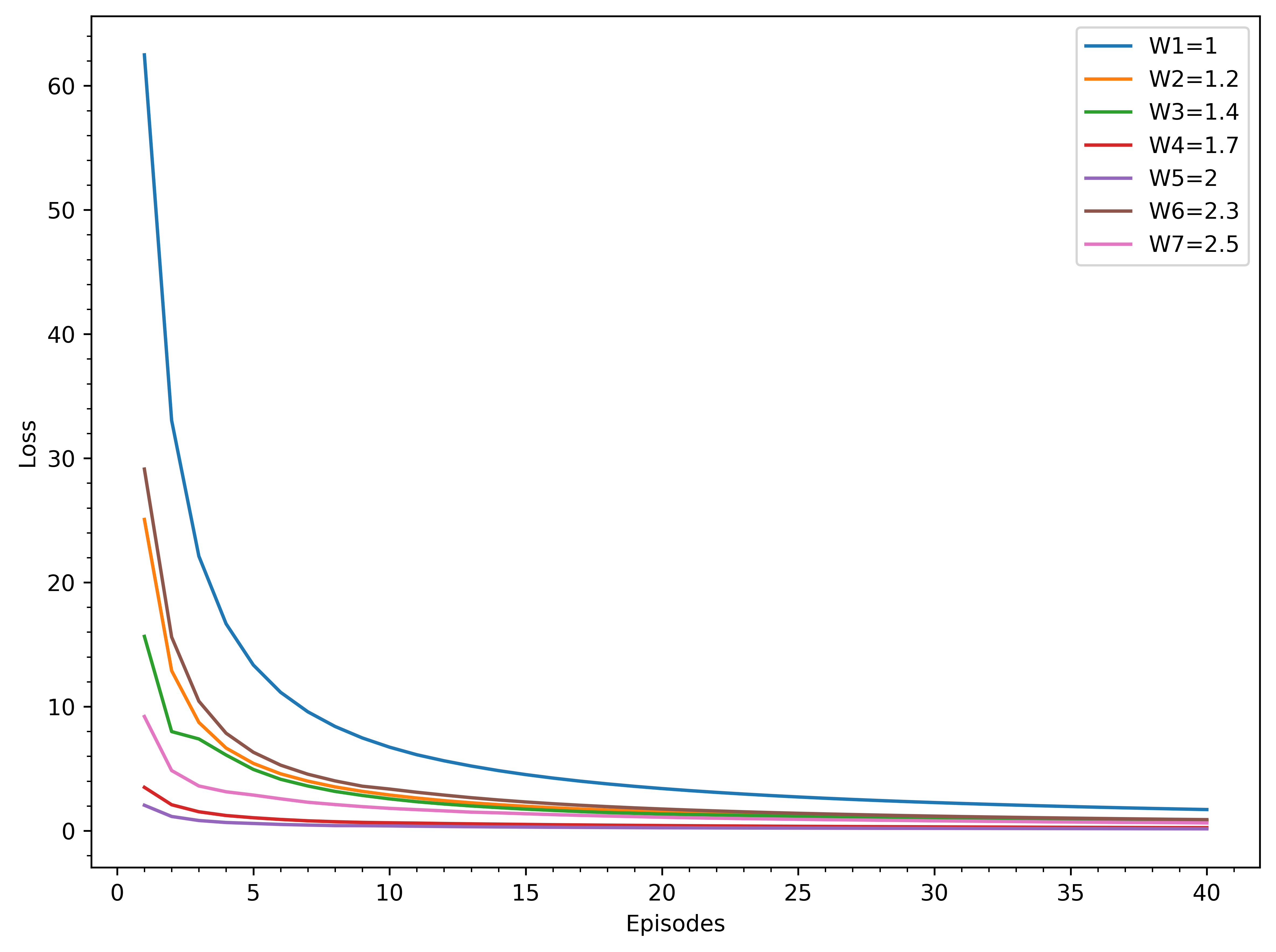}
        \vspace{-35pt}
	\end{minipage}}	
  \subfloat[]{
	\begin{minipage}[c][1\width]{
	   0.26\textwidth}
	   \centering
	   \includegraphics[width=1\textwidth]{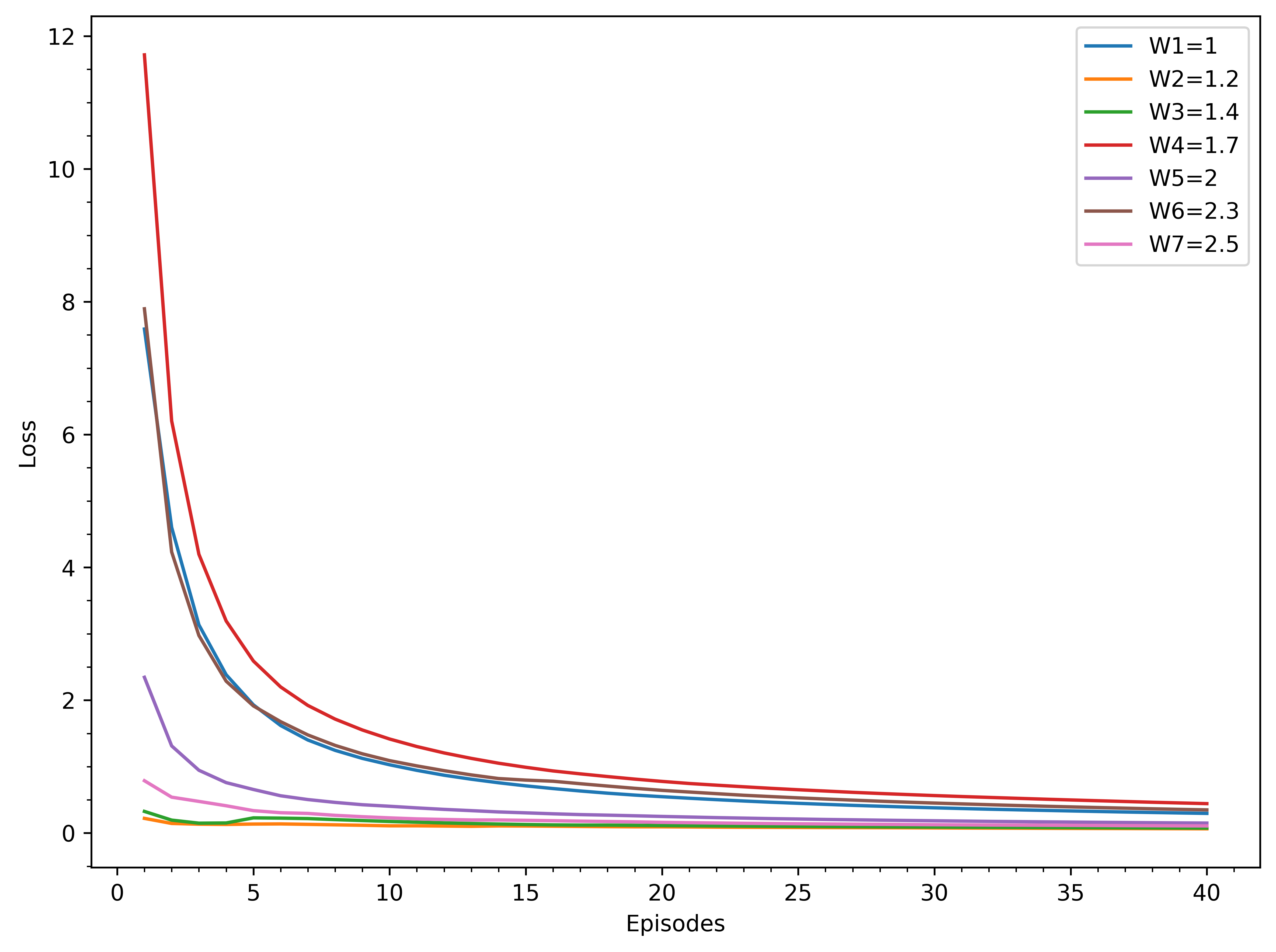}
    \vspace{-35pt}
	\end{minipage}}
\caption{Loss on Guard-Invader environment with  $(a)$ $49$, and $(b)$ $121$ states, respectively.}
\label{fig_GI_loss}
\end{figure}

\begin{table*}[htb]
    \centering
    \scalebox{0.8}{
    \begin{tabular}{|c|c|c|c|c|c|c|}
    \hline
        Algorithm & parameter $w$ & Loss (GI-49) & Loss (GI-121) & Loss (S-49) & Loss (S-121)   \\ \hline
        M2DQN & - & $0.8617\pm 0.0405$& $0.1017\pm 0.0004$&$0.1398\pm 0.0161$&$0.1772\pm 0.0201$ \\ \hline
        D-SOR-MQL(Ours) &1.2 &$0.3003 \pm 0.00784$&$0.0501 \pm 0.0002$  &$0.0260 \pm 0.0001$     &$0.0211 \pm 1.9821e-07$\\ \hline
        D-SOR-MQL(Ours) &1.4 &$0.2824 \pm 0.0202$ &$0.0734 \pm 0.0022$  &$0.0295 \pm 5.3483e-05$ &$0.0356 \pm 0.0002$ \\ \hline
        D-SOR-MQL(Ours) &1.7 &$0.1429 \pm 0.0018 $&$0.1251 \pm 0.0030$  &$0.0356 \pm 3.1057e-05$ &$0.2519 \pm 0.0439$ \\ \hline
        D-SOR-MQL(Ours) &2.0 &$0.1507 \pm 0.0092$ &$0.1029 \pm 0.0038$  &$0.0379 \pm 0.0001$     &$0.3557 \pm 0.0919$ \\ \hline
        D-SOR-MQL(Ours) &2.3 & $0.3211 \pm 0.0117$ & $0.1399 \pm 0.0003$& $0.0469 \pm 0.0008$    & $0.2843 \pm 0.0690$ \\ \hline
        D-SOR-MQL(Ours) &2.5 &$0.2160 \pm 0.0083$ &$0.0642 \pm 0.00004$  &$0.0462 \pm 2.8592e-05$&$0.0393 \pm 5.1237e-05$ \\ \hline
        
    \end{tabular}}
    \caption{Experimental results in terms of Loss after convergence on `Guard-Invader', and ` Soccer' environments for $49$ and $121$ states, represented as `GI-49', `GI-121', `S-49', `S-121', respectively.}
    \label{tab:1}
\end{table*}

\begin{figure}[htb]
\centering
\vspace{-30pt}
  \subfloat[]{
  \hspace{-20pt}
	\begin{minipage}[c][1\width]{
	   0.26\textwidth}
	   \centering
        \includegraphics[width=1\textwidth]{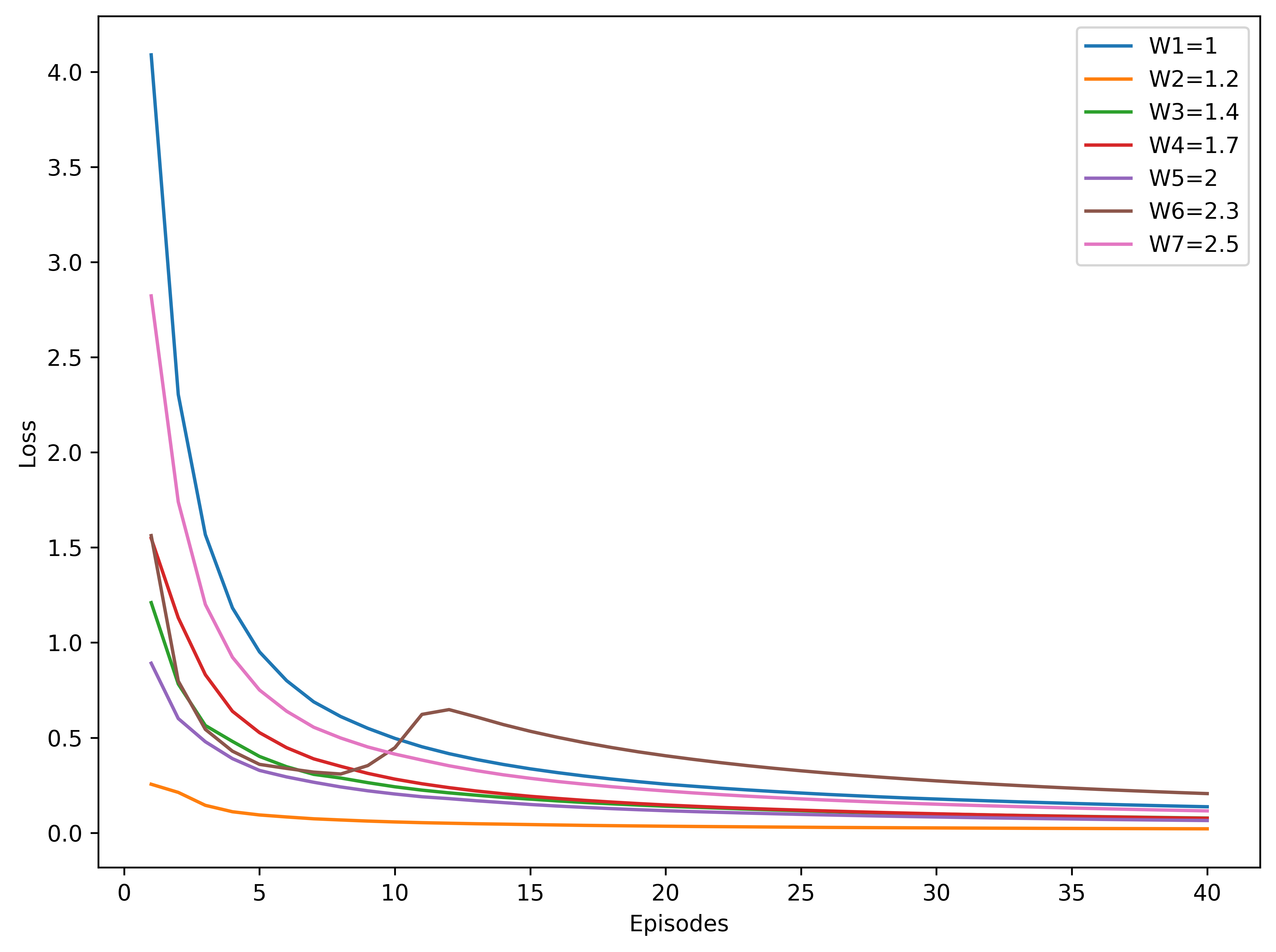}
        \vspace{-35pt}
	\end{minipage}}	
  \subfloat[]{
	\begin{minipage}[c][1\width]{
	   0.26\textwidth}
	   \centering
	   \includegraphics[width=1\textwidth]{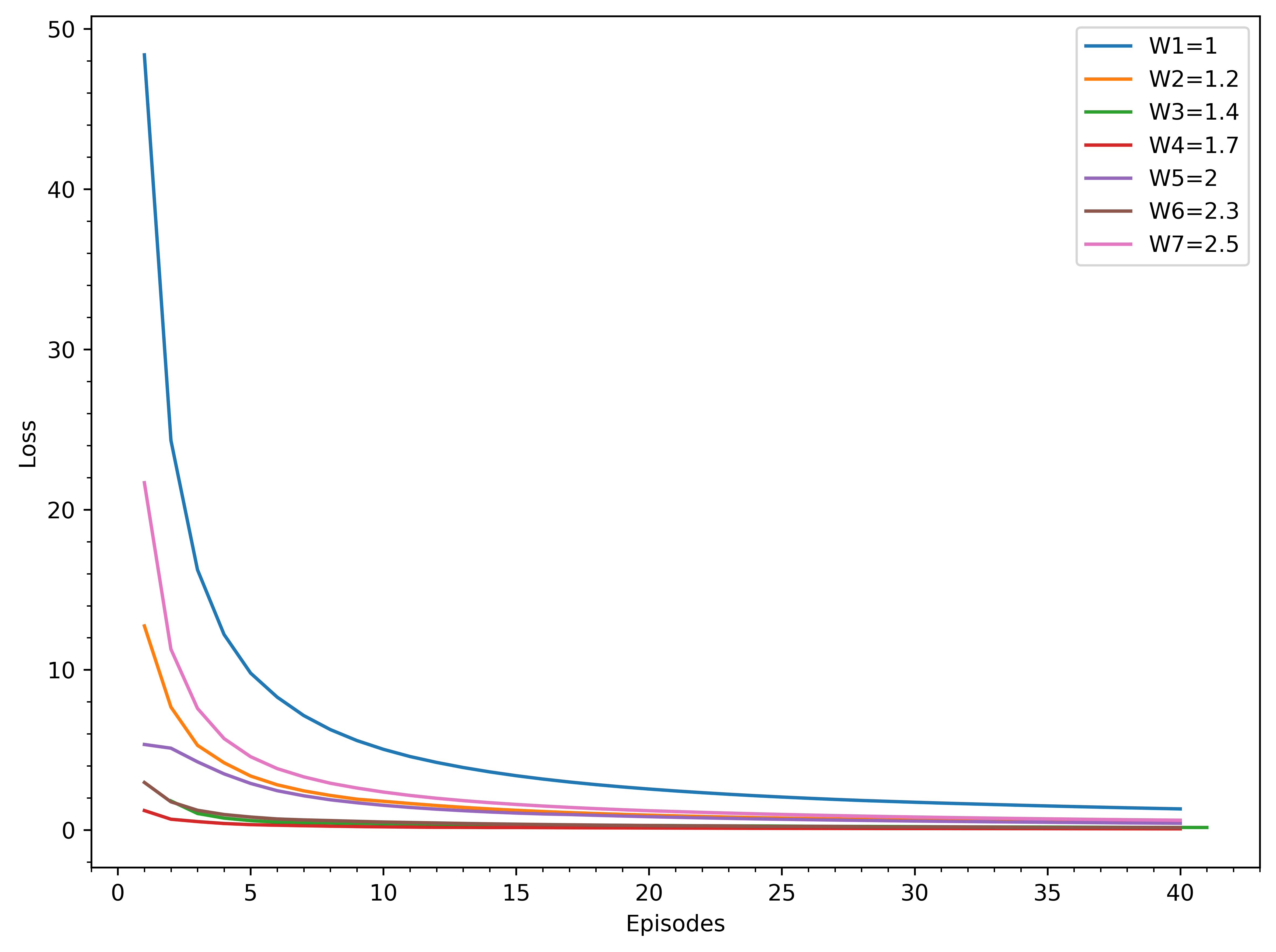}
    \vspace{-35pt}
	\end{minipage}}
\caption{Loss on Soccer environment with  $(a)$ $49$, and $(b)$ $121$ states, respectively.}
\label{fig_Soccer_loss}
\end{figure}

\begin{figure}[htb]
\centering
\vspace{-30pt}
  \subfloat[]{
  \hspace{-20pt}
	\begin{minipage}[c][1\width]{
	   0.26\textwidth}
	   \centering
        \includegraphics[width=1\textwidth]{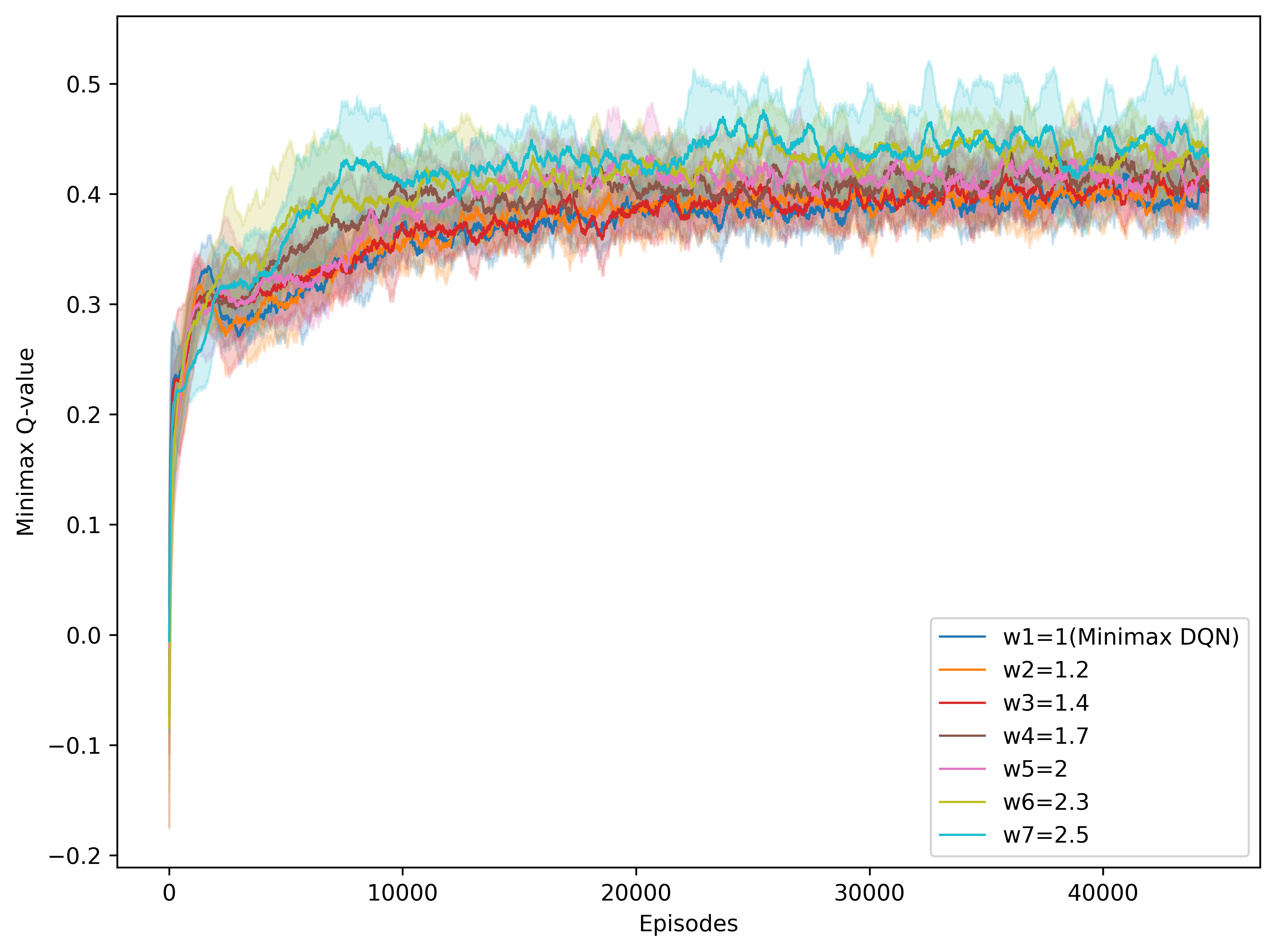}
        \vspace{-35pt}
	\end{minipage}}	
  \subfloat[]{
	\begin{minipage}[c][1\width]{
	   0.26\textwidth}
	   \centering
	   \includegraphics[width=1\textwidth]{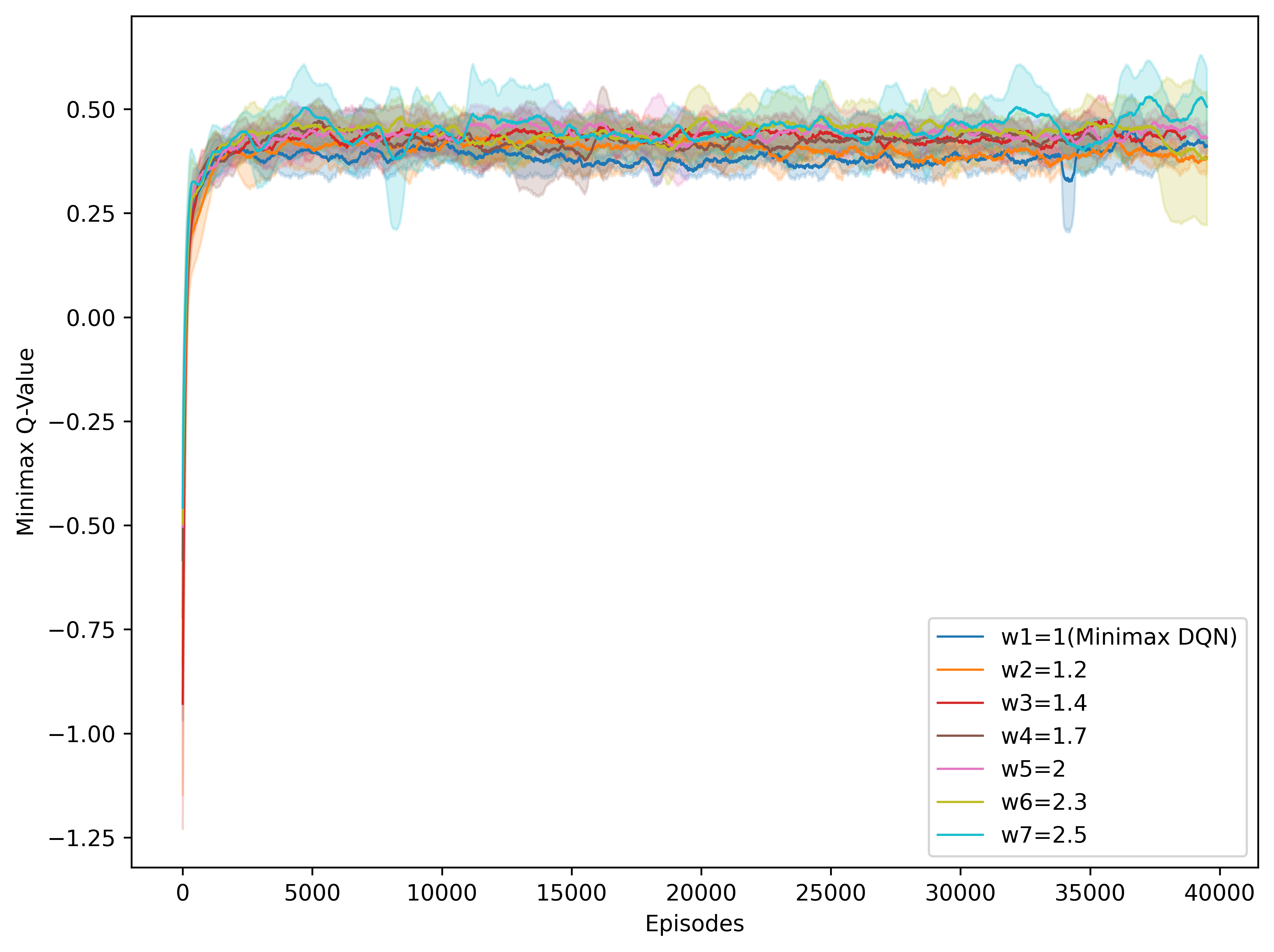}
    \vspace{-35pt}
	\end{minipage}}
\caption{Mimimax Q-value on Guard-Invader environment with  $(a)$ $49$, and $(b)$ $121$ states, respectively.}
\label{fig_GI_Q_val}
\end{figure}

\begin{figure}[htb]
\centering
\vspace{-30pt}
  \subfloat[]{
  \hspace{-20pt}
	\begin{minipage}[c][1\width]{
	   0.26\textwidth}
	   \centering
        \includegraphics[width=1\textwidth]{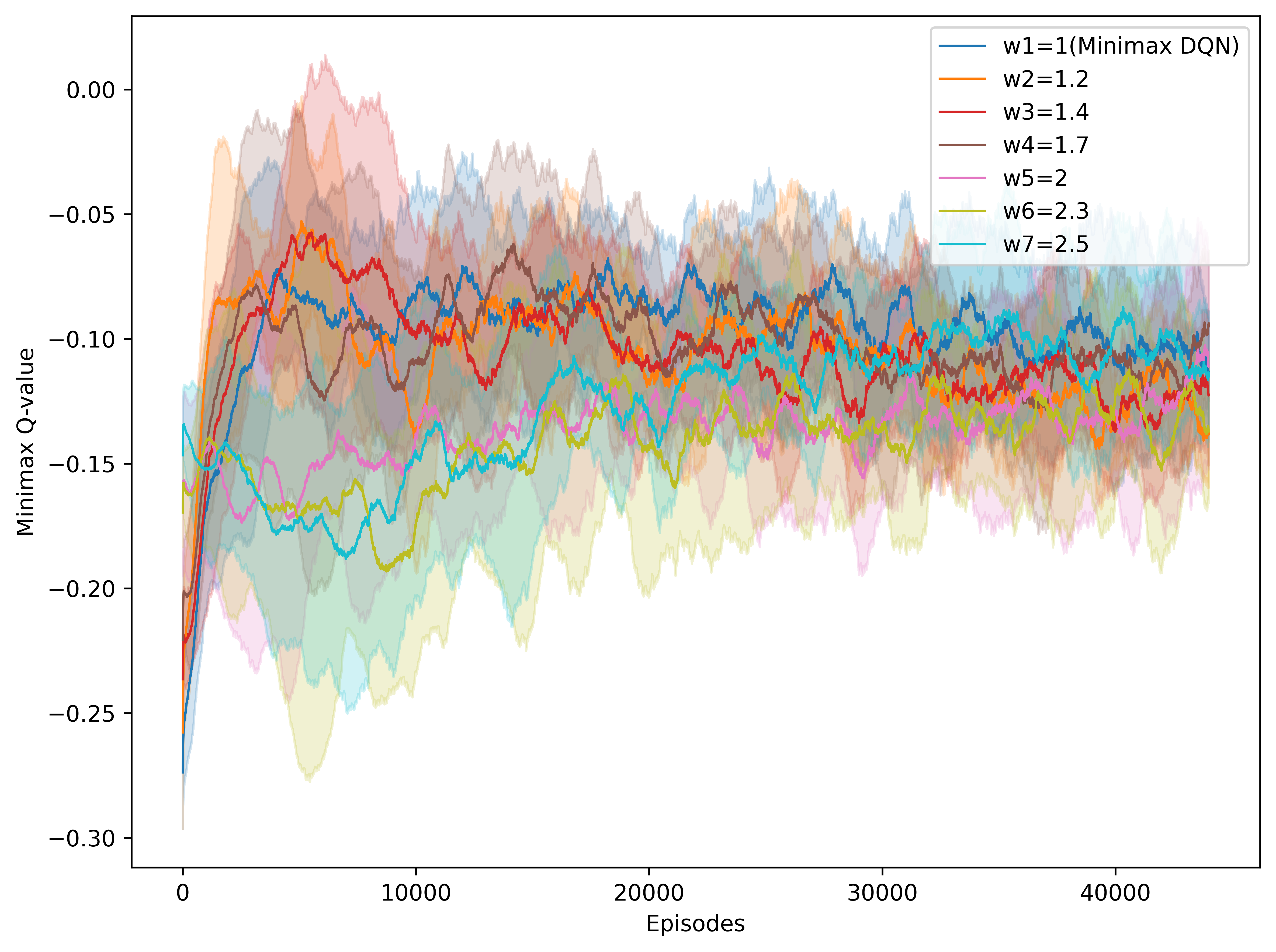}
        \vspace{-35pt}
	\end{minipage}}	
  \subfloat[]{
	\begin{minipage}[c][1\width]{
	   0.26\textwidth}
	   \centering
	   \includegraphics[width=1\textwidth]{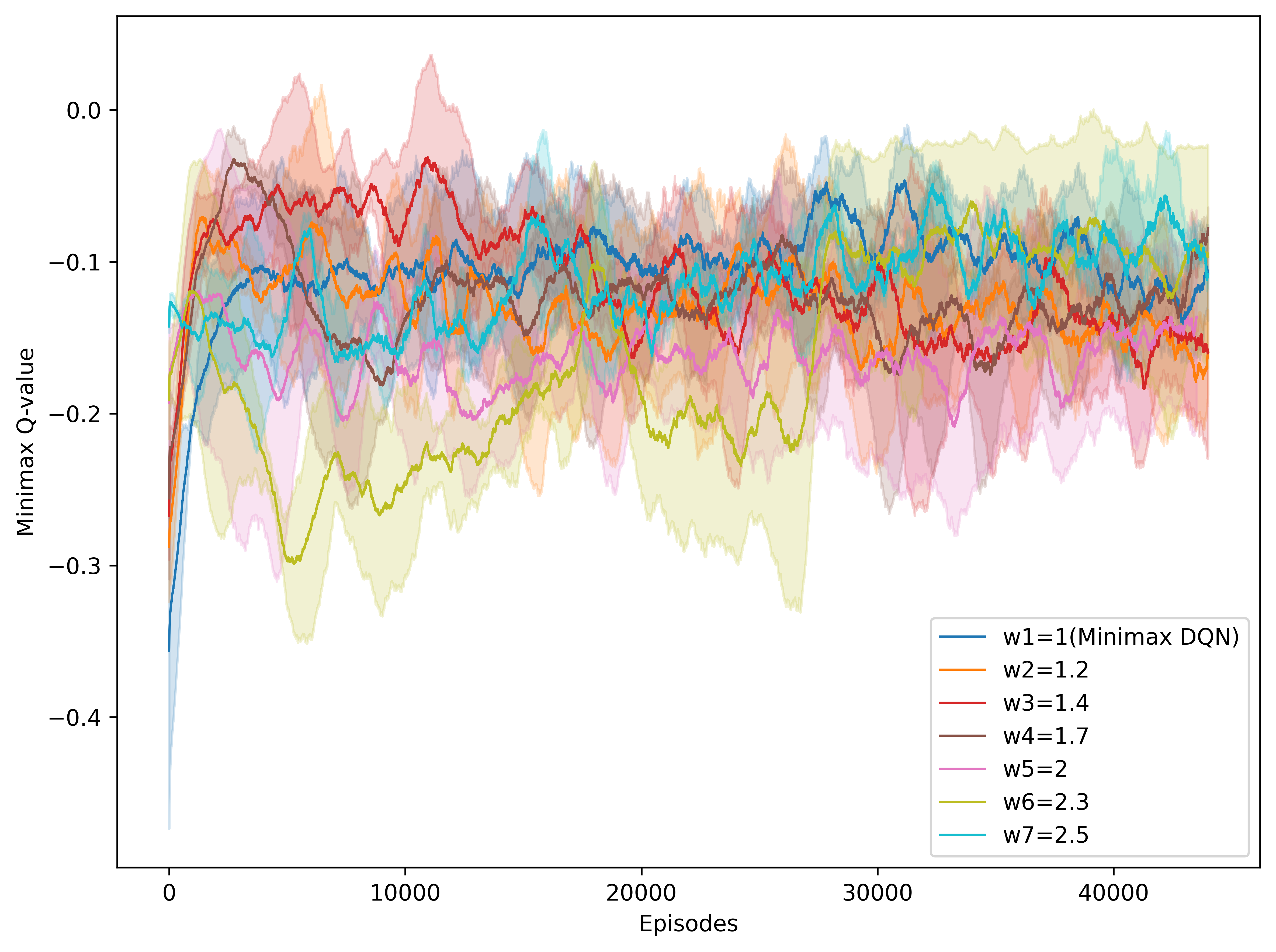}
    \vspace{-35pt}
	\end{minipage}}
\caption{Mimimax Q-value on Soccer environment with  $(a)$ $49$, and $(b)$ $121$ states, respectively.}
\label{fig_Soccer_Q_val}
\end{figure}

Another environment we consider is the Two-Player Soccer Game introduced in \cite{littman1994markov}. At the start of each episode, both players are randomly positioned on different cells of the grid, and the ball is assigned to one of them at random. The actions of the players are executed in a random order to ensure simultaneous play. If a player attempts to move into the opponent’s cell, the possession of the ball is transferred to the other player, and the move is aborted. A goal is scored when a player successfully moves the ball into their designated goal area, ending the game and awarding a reward of $+1$ to A (if A scores) or $-1$ to A (if B scores). In all other cases, the reward is zero. Due to the game's symmetry, a strategy learned by one player can be directly applied to the other.\\
\textbf{Parameters, Neural network architecture, Loss function:} For both the environments, we set the parameter discount factor to \(\gamma = 0.95\). The Q-network architecture consists of two hidden layers with 256 and 128 units, respectively. We used ReLU activation after each layer except for the output layer for which we used a linear activation. In the Guard–Invader game, the input includes guard and invader positions and territory coordinates, while in Soccer it includes both player's positions and a ball-possession indicator.  We use the Mean Squared Error (MSE) loss function with a batch size of 64 and a learning rate of \(5 \times 10^{-5}\) to train the network. The replay buffer has a maximum capacity of 10,000. Exploration is carried out using an \(\epsilon\)-greedy exploration policy where \(\epsilon\) decays exponentially from 1.0 to 0.1 with a decay rate of 20,000 steps. Experiments are conducted under \(T = 100\) and \(n = 5\), for various values of the parameter \(w \geq 1\). The case \(w = 1\) corresponds to the baseline Minimax Q-learning algorithm (M2QN), and serves as a point of comparison for the proposed method. The convergence rate improves with increasing $w$ up to a threshold $w^*$, after which further increase causes inconsistent performance.\\
\textbf{Results:} We present our training results in terms of loss and minimax Q-value with respect to number of training episodes, on the above-mentioned environments in Figures \ref{fig_GI_loss} to \ref{fig_Soccer_Q_val} (for a larger view, check Figures \ref{fig_GI_loss_appendix} to \ref{fig_Soccer_Q_val_appendix} in the Appendix). We further summarize the results after convergence in Table~\ref{tab:1}, where loss is presented in the form of `mean $\pm$ standard deviation'. It is observed from all figures and Table \ref{tab:1} that the convergence using our proposed algorithm for the optimal value of parameter $w$ is significantly faster than the baseline M2QN algorithm. Moreover, the mean squared error (MSE) at convergence is substantially lower compared to the baseline M2QN algorithm. The reduced variance (or standard deviation) in Table~\ref{tab:1} further indicates that our method shows greater stability.
\section{CONCLUSIONS}
In this work, we propose a deep SOR Minimax Q-Learning algorithm where deep neural networks are employed as a function approximation and resolve the issue of the curse of dimensionality that exists in the literature.
We further present a finite-time convergence analysis of our proposed algorithm with linear function approximation and demonstrate the theoretical soundness of the algorithm. Our experiments demonstrate that incorporating successive over-relaxation can accelerate convergence. In this work, while the empirical results utilize deep neural networks, the theoretical analysis is conducted using linear function approximators for tractability. As part of future work, we aim to bridge this gap by extending the analysis to settings where Q-values are approximated using deep neural networks.

\clearpage
\newpage
\bibliography{ref}

@book{puterman2014markov,
  title={Markov decision processes: discrete stochastic dynamic programming},
  author={Puterman, Martin L},
  year={2014},
  publisher={John Wiley \& Sons}
}

@book{bertsekas2012dynamic,
  title={Dynamic programming and optimal control: Volume I},
  author={Bertsekas, Dimitri},
  volume={4},
  year={2012},
  publisher={Athena scientific}
}

@book{sutton2018reinforcement,
  abstract = {},
   author = {R.S.~Sutton and A.G.~Barto},
   isbn = {9780262039246},
   pages = {526},
   title = {Reinforcement Learning : An Introduction, 2'nd Ed.},
   publisher ={MA: MIT Press},
   volume = {2},
   year={2018}
}

@article{shapley1953stochastic,
  title={Stochastic games},
  author={Shapley, Lloyd S},
  journal={Proceedings of the national academy of sciences},
  volume={39},
  number={10},
  pages={1095--1100},
  year={1953},
  publisher={National Academy of Sciences}
}

@article{chang2010adaptive,
  title={Adaptive adversarial multi-armed bandit approach to two-person zero-sum Markov games},
  author={Chang, Hyeong Soo and Hu, Jiaqiao and Fu, Michael C and Marcus, Steven I},
  journal={IEEE transactions on automatic control},
  volume={55},
  number={2},
  pages={463--468},
  year={2010},
  publisher={IEEE}
}

@article{Kamanchi2020,
  title={Successive over-relaxation Q-learning},
  author={C. Kamanchi and \textit{et al.}},
  year={2020},
  journal={IEEE Control Syst. Lett.},
volume={4},
  number={1},
  pages={55–60}
}

@inproceedings{john2020deep,
  title={Deep reinforcement learning with successive over-relaxation and its application in autoscaling cloud resources},
  author={John, Indu and Bhatnagar, Shalabh},
  booktitle={2020 International Joint Conference on Neural Networks (IJCNN)},
  pages={1--6},
  year={2020},
  organization={IEEE}
}

@article{diddigi2022generalized,
  title={A generalized minimax Q-learning algorithm for two-player zero-sum stochastic games},
  author={Diddigi, Raghuram Bharadwaj and Kamanchi, Chandramouli and Bhatnagar, Shalabh},
  journal={IEEE Transactions on Automatic Control},
  volume={67},
  number={9},
  pages={4816--4823},
  year={2022},
  publisher={IEEE}
}

@article{zhu2020online,
  title={Online minimax Q network learning for two-player zero-sum Markov games},
  author={Zhu, Yuanheng and Zhao, Dongbin},
  journal={IEEE Transactions on Neural Networks and Learning Systems},
  volume={33},
  number={3},
  pages={1228--1241},
  year={2020},
  publisher={IEEE}
}

@inproceedings{jeongfinite,
  title={Finite-Time Analysis of Minimax Q-Learning},
  author={Jeong, Narim and Lee, Donghwan},
  booktitle={Reinforcement Learning Conference}
}

@inproceedings{xu2020finite,
  title={A finite-time analysis of Q-learning with neural network function approximation},
  author={Xu, Pan and Gu, Quanquan},
  booktitle={International conference on machine learning},
  pages={10555--10565},
  year={2020},
  organization={PMLR}
}

@article{chen2019performance,
  title={Performance of q-learning with linear function approximation: Stability and finite-time analysis},
  author={Chen, Zaiwei and Zhang, Sheng and Doan, Thinh T and Maguluri, Siva Theja and Clarke, John-Paul},
  journal={arXiv preprint arXiv:1905.11425},
  pages={4},
  year={2019}
}

@inproceedings{qu2020finite,
  title={Finite-time analysis of asynchronous stochastic approximation and $ Q $-learning},
  author={Qu, Guannan and Wierman, Adam},
  booktitle={Conference on Learning Theory},
  pages={3185--3205},
  year={2020},
  organization={PMLR}
}

@article{lagoudakis2012value,
  title={Value function approximation in zero-sum markov games},
  author={Lagoudakis, Michail and Parr, Ron},
  journal={arXiv preprint arXiv:1301.0580},
  year={2012}
}

@book{myerson2013game,
  title={Game theory},
  author={Myerson, Roger B},
  year={2013},
  publisher={Harvard university press}
}

@article{zhao2014mec,
  title={MEC—A near-optimal online reinforcement learning algorithm for continuous deterministic systems},
  author={Zhao, Dongbin and Zhu, Yuanheng},
  journal={IEEE transactions on neural networks and learning systems},
  volume={26},
  number={2},
  pages={346--356},
  year={2014},
  publisher={IEEE}
}

@article{zhu2019invariant,
  title={Invariant adaptive dynamic programming for discrete-time optimal control},
  author={Zhu, Yuanheng and Zhao, Dongbin and He, Haibo},
  journal={IEEE Transactions on Systems, Man, and Cybernetics: Systems},
  volume={50},
  number={11},
  pages={3959--3971},
  year={2019},
  publisher={IEEE}
}

@inproceedings{perolat2015approximate,
  title={Approximate dynamic programming for two-player zero-sum Markov games},
  author={Perolat, Julien and Scherrer, Bruno and Piot, Bilal and Pietquin, Olivier},
  booktitle={International Conference on Machine Learning},
  pages={1321--1329},
  year={2015},
  organization={PMLR}
}

@inproceedings{tang2025deep,
  title={Deep reinforcement learning for robotics: A survey of real-world successes},
  author={Tang, Chen and Abbatematteo, Ben and Hu, Jiaheng and Chandra, Rohan and Mart{\'\i}n-Mart{\'\i}n, Roberto and Stone, Peter},
  booktitle={Proceedings of the AAAI Conference on Artificial Intelligence},
  volume={39},
  number={27},
  pages={28694--28698},
  year={2025}
}

@article{sahabandu2024rl,
  title={RL-ARNE: A reinforcement learning algorithm for computing average reward nash equilibrium of nonzero-sum stochastic games},
  author={Sahabandu, Dinuka and Moothedath, Shana and Allen, Joey and Bushnell, Linda and Lee, Wenke and Poovendran, Radha},
  journal={IEEE Transactions on Automatic Control},
  volume={69},
  number={11},
  pages={7824--7831},
  year={2024},
  publisher={IEEE}
}

@article{tsitsiklis1994asynchronous,
  title={Asynchronous stochastic approximation and Q-learning},
  author={Tsitsiklis, John N},
  journal={Machine learning},
  volume={16},
  number={3},
  pages={185--202},
  year={1994},
  publisher={Springer}
}

@article{shah2018q,
  title={Q-learning with nearest neighbors},
  author={Shah, Devavrat and Xie, Qiaomin},
  journal={Advances in Neural Information Processing Systems},
  volume={31},
  year={2018}
}

@inproceedings{bhandari2018finite,
  title={A finite time analysis of temporal difference learning with linear function approximation},
  author={Bhandari, Jalaj and Russo, Daniel and Singal, Raghav},
  booktitle={Conference on learning theory},
  pages={1691--1692},
  year={2018},
  organization={PMLR}
}

@article{azuma1967weighted,
  title={Weighted sums of certain dependent random variables},
  author={Azuma, Kazuoki},
  journal={Tohoku Mathematical Journal, Second Series},
  volume={19},
  number={3},
  pages={357--367},
  year={1967},
  publisher={Mathematical Institute, Tohoku University}
}

@inproceedings{lu2010investigation,
  title={An investigation of guarding a territory problem in a grid world},
  author={Lu, Xiaosong and Schwartz, Howard M},
  booktitle={Proceedings of the 2010 American Control Conference},
  pages={3204--3210},
  year={2010},
  organization={IEEE}
}

@incollection{littman1994markov,
  title={Markov games as a framework for multi-agent reinforcement learning},
  author={Littman, Michael L},
  booktitle={Machine learning proceedings 1994},
  pages={157--163},
  year={1994},
  publisher={Elsevier}
}


\clearpage
\appendix
\thispagestyle{empty}

\section{Appendix}
\label{appendix}
\subsection{Flowchart of the algorithm}
\begin{figure}[htbp]
  \centering
  \includegraphics[width=0.45\textwidth,height=0.6\textwidth]{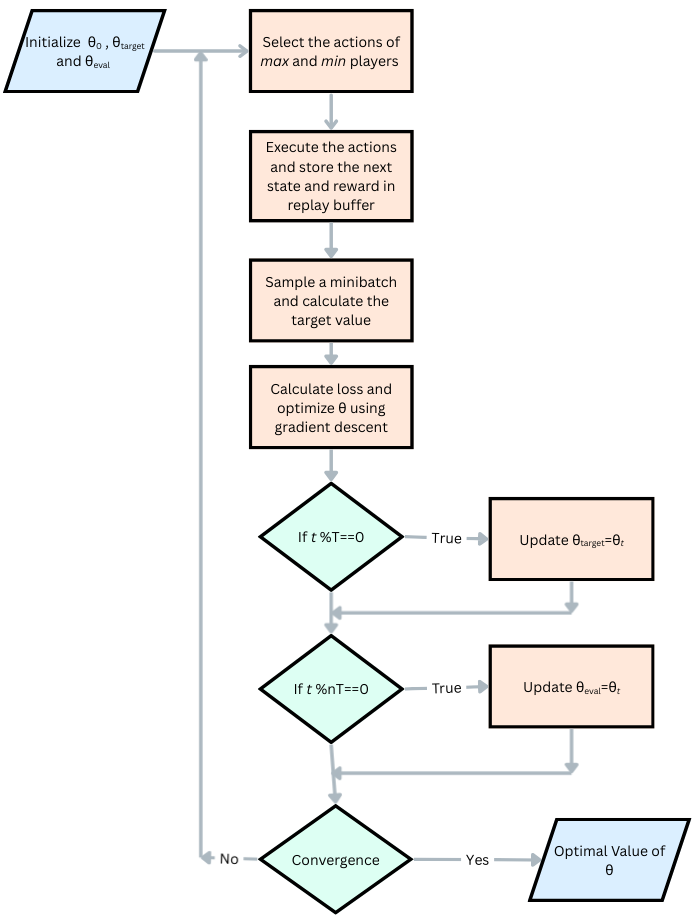} 
  \caption{Network parameters $\theta_0$, $\theta_{\text{eval}}$, and $\theta_{\text{target}}$. The current timestep is denoted by $t$; the target network parameters ($\theta_{\text{target}}$) are updated every T iterations, and the evaluation network parameters ($\theta_{\text{eval}}$) are updated every nT iterations, where n is the number of inner loops used for evaluation }
  \label{fig:myimage}
\end{figure}
\subsection{Proof of Lemma \ref{lemma:1}}
\begin{align}
    \|\theta_{t+1}-\theta^*\| \nonumber &\leq \|\prod_{h=\tau}^{t}(1-\alpha_h)\theta_\tau +\sum_{h=\tau}^{t}\alpha_h\prod_{l=h+1}^{t}(1-\alpha_l)F(\theta_h) \nonumber + \sum_{h=\tau}^{t}\alpha_k\prod_{l=h+1}^{t}(1-\alpha_l)\mathcal{M}(h) - \theta^*\| \nonumber\\
    &\leq \|\prod_{h=\tau}^{t}(1-\alpha_h)\theta_\tau +\sum_{h=\tau}^{t}\alpha_h\prod_{l=h+1}^{t}(1-\alpha_l)F(\theta_h) -\theta^* \|  + \|\sum_{h=\tau}^{t}\alpha_h\prod_{l=h+1}^{t}(1-\alpha_l)\mathcal{M}(h)\|
\end{align}

Using :
\begin{align}
    \prod_{h=\tau}^{t}(1-\alpha_h) + \sum_{h=\tau}^{t}\alpha_h\prod_{l=h+1}^{t}(1-\alpha_l) = 1 
\end{align}
we have 
\begin{align}
    \|\theta_{t+1}-\theta^*\|  &\leq \prod_{h=\tau}^{t}(1-\alpha_h)\|\theta_\tau -\theta^*\| +\gamma'\sum_{h=\tau}^{t}\alpha_h\prod_{l=h+1}^{t}(1-\alpha_l)\|\theta_h-\theta^*\| +\|\sum_{h=\tau}^{t}\alpha_h\prod_{l=h+1}^{t}(1-\alpha_l)\mathcal{M}(h)\|
\end{align}
writing in compact form, we have :
\begin{align}
        \xi_{t+1}\leq \beta_{\tau-1,t} \xi_\tau + \gamma' \sum_{h=\tau}^{t}\widetilde{\beta}_{h,t}\xi_h + \|\sum_{h=\tau}^{t} \widetilde{\beta}_{h,t}\mathcal{M}(h)\|
\end{align}
where $\beta_{\tau-1,t}=\prod_{h=\tau}^t(1-\alpha_h)$, $\widetilde{\beta}_{h,t} =\alpha_h\prod_{l=h+1}^{t}(1-\alpha_l)$ and $a_h=\|\theta_h - \theta^*\|$

\subsection{Proof of Lemma \ref{lemma:2}}
We have 
        \begin{align}    
            &(1-\alpha_t)=e^{\log(1-\frac{\mathcal{H}}{t+t_0})} \leq e^{-\frac{\mathcal{H}}{t+t_0}}\nonumber\\
            &\prod_{l=h+1}^{t}(1-\alpha_l)\leq e ^{-\sum_{h+1}^{t}\frac{\mathcal{H}}{t+t_0}} \leq e^{-\int_{h+1}^{t+1}\frac{\mathcal{H}}{y+t_0} dy} \nonumber\\ &\quad \quad \quad \quad \quad\leq e^{-\mathcal{H}\log \frac{t+1+t_0}{h+1+t_0}} = \bigg(\frac{h+1+t_0}{t+1+t_0}\bigg)^\mathcal{H}
        \end{align}
    Now we bound $\widetilde{\beta}_{h,t}^2$
        \begin{align}
        \label{eqn:beta}
            \widetilde{\beta}_{h,t}^2 &\leq \frac{\mathcal{H}^2}{(t+1+t_0)^{2\mathcal{H}}}\frac{(h+1+t_0)^{2\mathcal{H}}}{(h+t_0)^2} \leq \frac{2\mathcal{H}^2}{(t+1+t_0)^{2\mathcal{H}}}(h+t_0)^{2\mathcal{H}-2}
        \end{align}
    Eq \eqref{eqn:beta} comes from the fact that $(h+1+t_0)^{2\mathcal{H}}\leq 2(h+t_0)^{2\mathcal{H}}$, which is due to the condition on $t_0$ (Theorem\eqref{thm:1}). Subsequently we have, 
        \begin{align}
            \sum_{h=1}^{t}\widetilde{\beta}_{h,t}^2&\leq \frac{2\mathcal{H}^2}{(t+1+t_0)^{2\mathcal{H}}}\sum_{h=1}^{t}(h+t_0)^{2\mathcal{H}-2}\nonumber\\ &\leq \frac{2\mathcal{H}^2}{(t+1+t_0)^{2\mathcal{H}}}\int_{1}^{t+1}(y+t_0)^{2\mathcal{H}-2}\, dy\\
            &< \frac{2\mathcal{H}^2}{(t+1+t_0)^{2\mathcal{H}}}\frac{1}{2\mathcal{H}-1}(t+1+t_0)^{2\mathcal{H}-1}\\
            &< \frac{2\mathcal{H}}{t+1+t_0}
        \end{align}
    where in the last inequality we have used $2\mathcal{H}-1>\mathcal{H}$.
$\hfill \Box$

\subsection{Proof of Lemma \ref{lemma:3}}
To prove Lemma \eqref{lemma:3}, we have used induction.
We consider the following summation
\begin{align}
c_t=\sum_{h=\tau}^t\widetilde{\beta}_{h,t}\frac{1}{(h+t_0)^\Gamma}
\end{align}
We have to show that $c_t \leq \frac{1}{\sqrt{\gamma'}(t+1+t_0)}$. 
For $t=\tau$, if $\alpha_\tau \leq \frac{1}{2}$, $(1+\frac{1}{t_0})^\Gamma \leq \frac{2}{\sqrt{\gamma'}}$
\begin{align}
c_\tau=\widetilde{\beta}_{\tau,\tau}\frac{1}{(\tau+t_0)^\Gamma}=\alpha_\tau\frac{1}{(\tau + t_0)^\Gamma}\leq \frac{1}{\sqrt{\gamma'}(t+1+t_0)^\Gamma}
\end{align}
Let the statement be true for $t-1$, then we need to prove it for $c_t$
\begin{align}
c_t&=\sum_{h=\tau}^{t}\widetilde{\beta}_{h,t}\frac{1}{(h+t_0)^\Gamma} \nonumber\\
&= \sum_{h=\tau}^{t-1}\alpha_h \prod_{l=h+1}^{t}(1-\alpha_l)\frac{1}{(h+t_0)^\Gamma} + \alpha_t \frac{1}{(t+t_0)^\Gamma}\nonumber\\
&=(1-\alpha_t)\sum_{h=\tau}^{t-1}\prod_{l=h+1}^{t-1}(1-\alpha_l)\frac{1}{(h+t_0)^\Gamma} + \frac{\alpha_t}{(t+t_0)^\Gamma}\nonumber\\
&=(1-\alpha_t)c_{t-1} + \frac{\alpha_t}{(t+t_0)^\Gamma} \nonumber \\ 
&\leq (1-\alpha_t)\frac{1}{\sqrt{\gamma'}(t+t_0)^\Gamma} + \frac{\alpha_t}{(t+t_0)^\Gamma}   
= [1-\alpha_t(1-\sqrt{\gamma'})]\frac{1}{\sqrt{\gamma'}(t+t_0)^\Gamma}
\end{align} 
put $\alpha_t=\frac{\mathcal{H}}{t+t_0}$

\begin{align}
    c_t &\leq \Bigg[1- \frac{\mathcal{H}(1-\sqrt{\gamma'})}{t+t_0}\Bigg]\frac{1}{\sqrt{\gamma'}(t+t_0)^\Gamma}\nonumber\\
    & = \Bigg[1- \frac{\mathcal{H}(1-\sqrt{\gamma'})}{t+t_0}\Bigg]\Bigg(\frac{t+1+t_0}{t+t_0}\Bigg)^\Gamma \frac{1}{\sqrt{\gamma'}(t+1+t_0)^\Gamma} \nonumber \\
    &= \Bigg[1- \frac{\mathcal{H}(1-\sqrt{\gamma'})}{t+t_0}\Bigg]\Bigg(1+\frac{1}{t+t_0}\Bigg)^\Gamma \frac{1}{\sqrt{\gamma'}(t+1+t_0)^\Gamma} \nonumber \\
    &\leq \Bigg[1- \frac{\mathcal{H}(1-\sqrt{\gamma'})}{t+t_0}\Bigg]\Bigg(1+\frac{\Gamma}{t+t_0}\Bigg) \frac{1}{\sqrt{\gamma'}(t+1+t_0)^\Gamma} \nonumber \\
    &\leq \Bigg[\Big(1+\frac{\Gamma}{t+t_0}\Big) - \frac{\mathcal{H}(1-\sqrt{\gamma'})}{t+t_0}\Big(1+\frac{\Gamma}{t+t_0}\Big)\Bigg] \nonumber
    \frac{1}{\sqrt{\gamma'}(t+1+t_0)^\Gamma}\nonumber\\
    &\leq \Bigg[\Big(1+\frac{\Gamma}{t+t_0}\Big) - \frac{\mathcal{H}(1-\sqrt{\gamma'})}{t+t_0}\Bigg]\frac{1}{\sqrt{\gamma'}(t+1+t_0)^\Gamma}\\
    &\leq \frac{1}{\sqrt{\gamma'}(t+1+t_0)^\Gamma}
\end{align}
Equation () comes from the fact that $\Gamma \leq1$ and $\mathcal{H}(1-\sqrt{\gamma'})\geq 1$. As $\frac{\Gamma}{t+t_0} - \frac{\mathcal{H}(1-\sqrt{\gamma'})}{t+t_0}>-1$ , we can write $\Bigg(1+\frac{\Gamma}{t+t_0} - \frac{\mathcal{H}(1-\sqrt{\gamma'})}{t+t_0}\Bigg) \leq e^{\frac{\Gamma}{t+t_0} - \frac{\mathcal{H}(1-\sqrt{\gamma'})}{t+t_0}}\leq 1$

\subsection{Proof of Assumption \ref{assump:1}}
    \begin{align}
    &\| F(\theta_1) - F(\theta_2) \| = \|\mathbb{E}[\psi(s, a, b) \Bigg[ w \Big( \gamma \operatorname{val}(\psi(s', \cdot, \cdot)^{\top} \theta_1) \nonumber  - \gamma \operatorname{val}(\psi(s', \cdot, \cdot)^{\top} \theta_2) \Big) + (1 - w) \Big( \operatorname{val}(\psi(s, \cdot, \cdot)^{\top} \theta_1) - \nonumber\\
    & \quad \quad \quad \quad \operatorname{val}(\psi(s, \cdot, \cdot)^{\top} \theta_2) \Big) \Bigg] ]\|\nonumber \\
    &\leq \mathbb{E}[\|\psi(s, a, b) w \Big( \gamma \operatorname{val}(\psi(s', \cdot, \cdot)^{\top} \theta_1)- \gamma \operatorname{val}(\psi(s', \cdot, \cdot)^{\top} \theta_2) \Big) \nonumber  + \|\psi(s, a, b)(1 - w) \Big( \operatorname{val}(\psi(s, \cdot, \cdot)^{\top} \theta_1) - \operatorname{val}(\psi(s, \cdot, \cdot)^{\top} \theta_2) \Big)\| ]\nonumber\\
    &\leq \mathbb{E}[w \gamma| (  \operatorname{val}(\psi(s', \cdot, \cdot)^{\top} \theta_1)-  \operatorname{val}(\psi(s', \cdot, \cdot)^{\top} \theta_2) )| + (1-w)| ( \operatorname{val}(\psi(s, \cdot, \cdot)^{\top} \theta_1) - \operatorname{val}(\psi(s, \cdot, \cdot)^{\top} \theta_2) )|] 
\end{align}

\begin{table*}[!t]
    \centering
    \scalebox{0.8}{
    \begin{tabular}{|c|c|c|c|c|}
    \hline
       Algorithm & Multi-agent & reward settings & Fun. approx. &  Finite-time Analysis\\ \hline
       \cite{qu2020finite} & no & dis. reward & no & $O(t_{mix} \mid \mathcal{S} \mid \mid \mathcal{A} \mid \epsilon^{-2})$\\ \hline
          \cite{diddigi2022generalized} & yes, zero-sum & dis. reward & no & NA \tablefootnote{NA: Not Available}\\ \hline
           Proposed work & yes, zero-sum & dis. reward & yes & $O(\epsilon^{-2})$\\ \hline
    \end{tabular}}
    \caption{Comparative analysis of related works in terms of finite-time complexity}
    \label{tab_FA}
\end{table*}

Now, we will find the upper bound to $| (  \operatorname{val}(\psi(s, a_1, b_1)^{\top} \theta_1)-  \operatorname{val}(\psi(s, a_2, b_2)^{\top} \theta_2) )|$. 
    \begin{align}
        \min_{\substack{a_1}} \max_{\substack{b_1}}\psi(s,a_1,b_1)^{\top}\theta_1 -\min_{\substack{a_2}} \max_{\substack{b_2}}\psi(s,a_2,b_2)^{\top}\theta_2 \nonumber
        & \leq -\min_{\substack{b'}}\{-\max_{\substack{a_1}}\psi(s,a_1,b')^{\top}\theta_1 + \max_{\substack{a_2}}\psi(s,a_2,b')^{\top}\theta_2\} \nonumber\\
        & =\max_{\substack{b'}}\{\max_{\substack{a_1}}\psi(s,a_1,b')^{\top}\theta_1 - \max_{\substack{a_2}}\psi(s,a_2,b')^{\top}\theta_2\}\nonumber\\
        & \leq\max_{\substack{b',a'}}\psi(s,a',b')^{\top}(\theta_1-\theta_2)\nonumber\\
        & \leq\max_{\substack{b',a'}}|\sum_{i=1}^{d}\psi_{i}(s,a',b')(\theta_{1i}-\theta_{2i})| 
    \end{align}
    
    \begin{align}
        \min_{\substack{a_1}} \max_{\substack{b_1}}\psi(s,a_1,b_1)^{\top}\theta_1 -\min_{\substack{a_2}} \max_{\substack{b_2}}\psi(s,a_2,b_2)^{\top}\theta_2\nonumber
        &\geq\min_{\substack{b'}}\{\max_{\substack{a_1}}\psi(s,a_1,b')^{\top}\theta_1 - \max_{\substack{a_2}}\psi(s,a_2,b')^{\top}\theta_2\} \nonumber\\ & \geq\min_{\substack{b'}}\{\max_{\substack{a_1}}\psi(s,a_1,b')^{\top}\theta_1 - \max_{\substack{a_2}}\psi(s,a_2,b')^{\top}\theta_2\}\nonumber\\
        &\geq\min_{\substack{b'}}\{-\max_{\substack{a'}}\psi(s,a',b')^{\top}(\theta_2 -\theta_1)\}\nonumber \\
        &\geq-\max_{\substack{b',a'}}\psi(s,a',b')^{\top}(\theta_2-\theta_1)\nonumber\\
        & \geq-\max_{\substack{b',a'}}|\sum_{i=1}^{d}\psi_{i}(s,a',b')(\theta_{1i}-\theta_{2i})|
    \end{align}
Therefore,
    \begin{align}
        |\min_{\substack{a_1}} \max_{\substack{b_1}}\psi(s,a_1,b_1)^{\top}\theta_1 -\min_{\substack{a_2}} \max_{\substack{b_2}}\psi(s,a_2,b_2)^{\top}\theta_2|\nonumber
        &\leq \max_{\substack{b',a'}}|\psi(s,a',b')^{\top}(\theta_1-\theta_2)|\nonumber \\
        & \leq\max_{\substack{b',a'}}\|\psi(s,a',b')\|\|(\theta_1-\theta_2)\|\nonumber\\
        & \leq\|\theta_1-\theta_2\|
    \end{align}
Thus we have,
    \begin{align}
        \| F(\theta_1) - F(\theta_2) \| \leq (w\gamma+1-w)\|\theta_1-\theta_2\|
    \end{align}
$\hfill \Box$
\subsection{System Specifications and Larger size plots of experimental results}

\textbf{System Specifications:} All experiments are conducted on an NVIDIA RTX A6000 GPU with 128 GB of RAM.\\

\textbf{Larger versions of the plots:} 

\begin{figure*}[htb]
\centering
\vspace{-30pt}
  \subfloat[]{
  \hspace{-20pt}
	\begin{minipage}[c][1\width]{
	   0.5\textwidth}
	   \centering
        \includegraphics[width=1\textwidth]{results/loss_pg_7.png}
        \vspace{-35pt}
	\end{minipage}}	
  \subfloat[]{
	\begin{minipage}[c][1\width]{
	   0.5\textwidth}
	   \centering
	   \includegraphics[width=1\textwidth]{results/loss_pg_11.png}
    \vspace{-35pt}
	\end{minipage}}
\caption{Loss on Guard-Invader environment with  $(a)$ $49$, and $(b)$ $121$ states, respectively.}
\label{fig_GI_loss_appendix}
\end{figure*}

\begin{figure*}[htb]
\centering
\vspace{-30pt}
  \subfloat[]{
  \hspace{-20pt}
	\begin{minipage}[c][1\width]{
	   0.5\textwidth}
	   \centering
        \includegraphics[width=1\textwidth]{results/qval_pr_7.png}
        \vspace{-35pt}
	\end{minipage}}	
  \subfloat[]{
	\begin{minipage}[c][1\width]{
	   0.5\textwidth}
	   \centering
	   \includegraphics[width=1\textwidth]{results/qval_pr_11.png}
    \vspace{-35pt}
	\end{minipage}}
\caption{Mimimax Q-value on Guard-Invader environment with  $(a)$ $49$, and $(b)$ $121$ states, respectively.}
\label{fig_GI_Q_val_appendix}
\end{figure*}

\begin{figure*}[htb]
\centering
\vspace{-30pt}
  \subfloat[]{
  \hspace{-20pt}
	\begin{minipage}[c][1\width]{
	   0.5\textwidth}
	   \centering
        \includegraphics[width=1\textwidth]{results/loss_sc_7.png}
        \vspace{-35pt}
	\end{minipage}}	
  \subfloat[]{
	\begin{minipage}[c][1\width]{
	   0.5\textwidth}
	   \centering
	   \includegraphics[width=1\textwidth]{results/loss_sc_11.png}
    \vspace{-35pt}
	\end{minipage}}
\caption{Loss on Soccer environment with  $(a)$ $49$, and $(b)$ $121$ states, respectively.}
\label{fig_Soccer_loss_appendix}
\end{figure*}

\begin{figure*}[htb]
\centering
\vspace{-30pt}
  \subfloat[]{
  \hspace{-20pt}
	\begin{minipage}[c][1\width]{
	   0.5\textwidth}
	   \centering
        \includegraphics[width=1\textwidth]{results/qval_sc_7.png}
        \vspace{-35pt}
	\end{minipage}}	
  \subfloat[]{
	\begin{minipage}[c][1\width]{
	   0.5\textwidth}
	   \centering
	   \includegraphics[width=1\textwidth]{results/qval_sc_11.png}
    \vspace{-35pt}
	\end{minipage}}
\caption{Mimimax Q-value on Soccer environment with  $(a)$ $49$, and $(b)$ $121$ states, respectively.}
\label{fig_Soccer_Q_val_appendix}
\end{figure*}
\end{document}